\DeclareRobustCommand\onedot{\futurelet\@let@token\@onedot}
\def\@onedot{\ifx\@let@token.\else.\null\fi\xspace}
\def\eg{e.g\onedot}
\def\ie{i.e\onedot}
\def\wrt{w.r.t\onedot}
\newcommand{\ignore}[1]{}
\newcommand{\modelname}{DI-Net\xspace}
\newcommand{\modelnames}{DI-Nets\xspace}
\def\blue{}
\DeclarePairedDelimiter\abs{\lvert}{\rvert}
\DeclarePairedDelimiter\norm{\lVert}{\rVert}
\let\oldabs\abs
\def\abs{\@ifstar{\oldabs}{\oldabs*}}
\let\oldnorm\norm
\def\norm{\@ifstar{\oldnorm}{\oldnorm*}}
\newcommand{\figref}[1]{Fig.~\ref{#1}}
\newcommand{\tabref}[1]{Table~\ref{#1}}
\def\fvec{{\mathbf{f}}}
\def\bff{{\mathbf{f}}}
\def\vx{{\bm{x}}}
\DeclareMathAlphabet{\mathsfit}{\encodingdefault}{\sfdefault}{m}{sl}
\SetMathAlphabet{\mathsfit}{bold}{\encodingdefault}{\sfdefault}{bx}{n}
\def\dataset{{\mathcal{D}}}
\def\gF{{\mathcal{F}}}
\def\gK{{\mathcal{K}}}
\def\loss{{\mathcal{L}}}
\def\gR{{\mathcal{R}}}
\def\gT{{\mathcal{T}}}
\def\gU{{\mathcal{U}}}
\def\algebra{{\mathscr{A}}}
\def\manifold{{\mathscr{M}}}
\def\natural{{\mathbb{N}}}
\def\real{{\mathbb{R}}}
\def\indicator{{\mathbbm{1}}}
\def\eps{{\epsilon}}
\newcommand{\pder}[1]{\frac{\partial}{\partial #1}}
\newcommand{\limNinf}{\lim_{N \to \infty}}
\newcommand{\limTzero}{\lim_{\tau \to 0}}
\newcommand{\layer}{\mathcal{H}}
\newcommand{\defeq}{\triangleq}
\newcommand{\domain}{\Omega}
\newcommand{\cin}{c_{\rm{in}}}
\newcommand{\cout}{c_{\rm{out}}}
\newcommand{\intdomain}{\int_{\domain}}
\newcommand{\network}{\gT}
\newcommand{\subnet}{\gK}
\newcommand{\map}{\gR} %\gR
\newcommand{\innerproduct}[2]{\langle #1, #2 \rangle}
\newcommand{\mcsum}[1][x']{\frac{1}{|X|}\sum_{#1 \in X}}
\newcommand{\mcsumN}[1][x]{\frac{1}{|X_N|}\sum_{#1 \in X_N}}
\newcommand{\inrspace}[1][c]{\gF_{#1}}
\DeclareMathOperator*{\argmin}{arg\,min}
\theoremstyle{plain}
\newtheorem{theorem}{Theorem}[section]
\newtheorem{lemma}[theorem]{Lemma}
\newtheorem{corollary}[theorem]{Corollary}
\theoremstyle{definition}
\newtheorem{definition}[theorem]{Definition}
\theoremstyle{remark}
\title{Discretization Invariant Networks for Learning Maps between Neural Fields}
\author{\name Clinton Wang \email clintonw@csail.mit.edu \\
      \addr MIT CSAIL
      \AND
      \name Polina Golland \email polina@csail.mit.edu \\
      \addr MIT CSAIL}
\begin{document}

\maketitle

\begin{abstract}
% With the ability to generate and store continuous data in the form of neural fields, there is a need for neural networks that can process such data in a manner that is invariant to the discretization of the domain.
With the emergence of powerful representations of continuous data in the form of neural fields, there is a need for discretization invariant learning -- an approach for learning maps between functions on continuous domains without being sensitive to how the function is sampled. We present a new framework for understanding and designing discretization invariant neural networks (\modelnames), which generalizes many discrete networks such as convolutional neural networks as well as continuous networks such as neural operators. Our analysis establishes upper bounds on the deviation in model outputs under different finite discretizations, and highlights the central role of point set discrepancy in characterizing such bounds. This insight leads to the design of a family of neural networks driven by numerical integration via quasi-Monte Carlo sampling with discretizations of low discrepancy. We prove by construction that \modelnames universally approximate a large class of maps between integrable function spaces, and show that discretization invariance also describes backpropagation through such models.
Applied to neural fields, convolutional \modelnames can learn to classify and segment visual data under various discretizations, and sometimes generalize to new types of discretizations at test time. Code: \url{https://github.com/clintonjwang/DI-net}.
\end{abstract}
%Current analyses of discretization invariance are limited to showing the existence of operators that converge in the limit of discretizations with infinite points. 
%Discretization invariant networks can operate on neural fields of any type.
% While current analyses of discretization invariant learning are restricted to the limit of infinite samples, our analysis does not require infinite samples and establishes upper bounds on the deviation in \modelname outputs given different finite discretizations. 
%We establish a constructive proof that \modelnames are universal approximators under particular discretizations.
%processing often requires post-hoc discretization of the input domain.
%stipulate convergence, this work presents
% We treat NFs as $L^2$ functions on the underlying space and examine their group structure under discretization. 
% Discretizing with a low discrepancy sequence enables quasi-Monte Carlo (QMC) 
% current notions of discretization invariance only present results in the limit of infinite samples

\section{Introduction}

Neural fields (NFs), which encode signals as the parameters of a neural network, have many useful properties. NFs can efficiently store and stream continuous data \citep{siren, takikawa2022variable, cho2022streamable}, represent and render detailed 3D scenes at lightning speeds \citep{mueller2022instant, barron2023zipnerf, kerbl3Dgaussians}, and integrate data from a wide range of modalities \citep{gao2021ObjectFolder, objectfolder2}. NFs are thus an appealing data representation for many applications, but learning downstream tasks such as NF classification, segmentation, or generation remains a challenging problem. Their continuous domain makes NFs unsuitable as inputs to traditional neural network architectures that are designed for discrete pixel or voxel grids.

Current approaches for training networks on a dataset of NFs predominantly focus on learning in NF parameter space \citep{metalearning1,functa,mehta2021modulated}, but such approaches have two major disadvantages: 1) once trained on the parameter space of one type of NF, they become incompatible with other types; 2) they are unsuitable for important classes of NFs whose parameters extend beyond a neural network, such as those with voxel \citep{directvoxgo, yu2021plenoxels}, octree \citep{yu2021plenoctrees}, hash table \citep{mueller2022instant, takikawa2022variable, barron2023zipnerf} or other \citep{kerbl3Dgaussians} components. We instead view NFs as black box vector-valued functions, and hence performing inference on NFs can be understood as learning operators on a function space. Given that an algorithm can only evaluate an NF at a finite number of points, we ask how a learning algorithm should sample each NF on its continuous domain, and how to design a model whose output is largely independent of how the sample points are chosen, a property called \emph{discretization invariance}.
% There remains a need for a deep learning framework that can process arbitrary discretizations of NFs and is interoperable between different types of NFs.

Current analyses of discretization invariance are limited to showing the existence of operators that converge in the limit of discretizations with infinite points \citep{kovachki2021neural}, \blue{or they demand that the function spaces be constrained to those that can be discretized losslessly \citep{bartolucci2023neural}. They do not examine the effect of the discretization itself on the approximation error in the general case. Characterizing this effect is particularly relevant in the context of neural fields, which permit many different types of discretizations and are often queried repeatedly under slightly different discretizations in applications such as novel view synthesis.}
In this paper we describe discretization invariant neural networks (\modelnames), a broad class of networks for learning maps between integrable function spaces such as those represented by neural fields. We explore how different discretizations yield different behaviors in the finite case, and establish the central role of point set discrepancy -- points in a limiting discretization must be evenly distributed to achieve convergence. By specifying layers as integrals over parametric functions of the input field, \modelnames have access to powerful numerical integration techniques such as quasi-Monte Carlo sampling, which yields fast convergence by choosing low discrepancy discretizations. \modelnames encompass continuous networks such as neural operators \citep{kovachki2021neural}, while also extending discrete networks that act on pixels, point clouds, and meshes. %\modelnames learn maps that often generalize to unseen discretizations, whereas that maps learned by pre-trained discrete networks collapse under slight perturbations of the discretization. %\modelnames are universal approximators for a wide class of maps between function spaces. They can be applied to classification, segmentation, and many other tasks. %We focus on convolutional \modelnames, simple analogues of convolutional neural networks which can be applied in much the same way, under a variety of different discretizations.

% Because \modelnames can evaluate an NF on an arbitrary number of points, they flexibly trade off speed and accuracy, and we find that they generalize to varying data resolutions. By constructing layers analogously to discrete networks, \modelnames can be initialized from a pre-trained discrete network, then fine-tuned with a small NF dataset.

\textbf{Summary of contributions:} we analyze discretization invariance in the finite sample case, where discrepancy and equidistributed sequences play a central role. Our analysis gives rise to a large family of discretization invariant networks (\modelnames) that universally approximate a large class of maps between function spaces. We show that backpropagation through \modelnames also yields discretization invariant gradients. We probe the limits of discretization invariant networks in practice, demonstrating that convolutional \modelnames can learn classification and dense prediction tasks on neural fields under a range of different discretizations. We show that \modelname has some ability to generalize to new discretizations at test time, whereas maps learned by pre-trained discrete networks collapse under slight perturbations of the discretization. %modulated by the task and the type of discretizations it was trained on.

\section{Related Work}

\paragraph{Neural fields}
Neural fields (also called implicit neural representations) are neural networks that can be trained to capture a wide range of continuous data with high fidelity. They are usually parameterized as MLPs, sometimes with additional components such as features stored in voxel \citep{directvoxgo}, octree \citep{yu2021plenoctrees} or hash table \citep{mueller2022instant, takikawa2022variable, barron2023zipnerf} structures. Alternatively, neural fields can be represented directly as a set of parameters optimized directly with gradient descent \citep{yu2021plenoxels, kerbl3Dgaussians}. The most prominent domains include shapes \citep{deepsdf, occupancy_net}, objects \citep{DVR, autorf}, and 3D scenes \citep{nerf, lightfieldnets}, but previous works also apply NFs to gigapixel images \citep{martel2021acorn}, volumetric medical images \citep{mednerf}, acoustic data \citep{siren, gao2021ObjectFolder}, tactile data \citep{objectfolder2}, depth and segmentation maps \citep{panoptic_neural_fields}, and 3D motion \citep{occflow}. 

\paragraph{Learning on neural fields}
Hypernetworks and modulation networks were developed for learning on neural fields, and have been demonstrated on tasks including generative modeling, data imputation, novel view synthesis and classification \citep{siren, lightfieldnets, metalearning1, scenerepnets, metasdf, mehta2021modulated, chan2021pi, generative_inr, functa}. Hypernetworks use meta-learning to learn to produce the MLP weights of desired output NFs, while modulation networks predict modulations that can be used to transform the parameters of an existing NF or generate a new NF. 
An alternative approach uses the derivative networks of the NF \citep{wang_zhangyang_inr}, training an MLP that learns a mapping between first through $k$th order derivatives of the NF and the desired output signal. Both of these approaches require that the input NFs are entirely MLPs, and do not generalize to new architectures once trained.
An alternative approach for learning 3D NF$\to$NF maps evaluates an input NF at fixed grid points, produces features at the same points via a U-Net, raytraces interpolated features, then uses an MLP decoder to produce output values from arbitrary camera views \citep{vora2021nesf}.

\paragraph{Group invariant neural networks}
There is a rich literature exploring the design and training of neural networks for learning maps between function spaces subject to symmetries such as permutation invariance \citep{zweig2021functional, deepsets}, rotational invariance \citep{cheng2018learning}, or more general group invariances \citep{yarotsky2022universal, invariance_benefits}. Discretization invariance must be treated differently from group invariance as discretizations lack all the properties of groups (associativity, identity element, and inverse element). Moreover, producing identical outputs under arbitrary discretizations is impossible on all but the most trivial function spaces, hence discretization invariance must be described in an approximate or limiting sense. \blue{Since group actions map between different discretizations, discretization invariance can be seen as a weaker yet more general form of group invariance. Discretization invariance may be more useful in settings where the task of interest should be able to be solved under a wide range of discretizations which are not related by a single group.}

\paragraph{Approximation capabilities of neural networks}
A fundamental result in approximation theory is that the set of single-layer neural networks is dense in a large space of functionals including $L^p(\real^n)$ \citep{hornik1991approximation}. Subsequent works designed constructive examples using various non-linear activations \citep{approxCsup, chen1993approximations}. While this result is readily extended to multi-dimensional outputs, existing approximation results for the case of infinite dimensional outputs (\eg, $L^p(\real^n) \to L^p(\real^n)$) do not explicitly characterize the contribution of data discretization to the approximation error \citep{bhattacharya2020model, deeponets_error, kovachki2021neural, kovachki2021universal}. \blue{Universal approximation results for operator learning frameworks typically quantify the approximation error in terms of the class of functions being approximated or the type of layers used in the network \citep{kovachki2021neural, bartolucci2023neural, kissas2022learning, prasthofer2022variableinput} rather than the choice of discretization, a gap that we seek to fill in this work.}

\paragraph{Discretization invariant networks}
Networks that are agnostic to the discretization of the data domain has been explored primarily in the context of learning operators between function spaces. Hilbert space PCA, DeepONets and neural operators are tailored to solve partial differential equations efficiently \blue{in a manner that converges as more sensors are added and the discretization of the input space is refined \citep{bhattacharya2020model, Lu_2021, li2020fourier, kovachki2021neural}. In the context of learning on surface meshes, DiffusionNet \citep{sharp2022diffusionnet} also defines discretization invariance as convergent behavior in the limit of infinite sample points (mesh refinement). The recent operator learning framework ReNO \citep{bartolucci2023neural} starts with the assumption that there exists a lossless discretization of the input and output function spaces which is known a priori (\eg, they are bandlimited functions), then establishes necessary conditions for learning (lossless) operators between such spaces. Other works are concerned with more practical aspects of operator learning: LOCA \citep{kissas2022learning} leverages attention to more efficiently learn correlations between related points in output space; NOMAD \citep{seidman2022nomad} aims to increase expressivity given finite basis elements by equipping neural operators with nonlinear decoders; VIDON \citep{prasthofer2022variableinput} builds on DeepONet to accommodate arbitrary locations and numbers of input and output query points (as does our framework); other extensions of DeepONet and neural operators refine the original works to improve learning efficiency or generalizability on certain domains \citep{li2020neural, lu2021learning, lee2022hyperdeeponet}.}

\paragraph{Continuous convolutions}
At the core of many discretization invariant approaches is the continuous convolution, which also provides permutation invariance, translation invariance and locality. Its applications include modeling point clouds \citep{uber_contconv, boulch_contconv}, graphs \citep{spline_kernel}, fluids \citep{ummenhofer2019lagrangian}, and sequential data \citep{ckconv, FlexConv}, where there is typically no choice of how the data should be discretized. This work focuses on the effect of different discretizations, proposes quasi-Monte Carlo as a canonical method of generating discretizations, and can produce neural fields as output.

\section{Discretization Invariant Learning} \label{sec:di_learning}
% In this section we formalize discretization invariance, define \modelnames, and derive properties that enable \modelnames to serve as a general framework for learning maps between function spaces including neural fields.

\subsection{Discretization Invariance}

Let $\domain$ be a bounded measurable subset of a $d$-dimensional compact metric space, for example a compact subset of $\real^d$ or a $d$-dimensional manifold.
Consider the space of vector-valued functions of bounded variation $\inrspace = \{ f: \domain \to \real^c : \int_{\domain} \norm{f}^2 d\mu < \infty \text{ and } V(f) < \infty \}$, 
where the variation $V(f)$ measures how much the function fluctuates over its domain. The variation of a 1D function $f \in C^1([a,b])$ is given by:
\begin{equation}
V(f) = \int_a^b |f'(x)| dx ,
\end{equation}
and more general definitions are given in Appendix \ref{app:qmc_background}.

The \emph{discretization} of a function is a finite point set $X \subset \domain$ on which it is queried.
We say that a map $\layer: \inrspace \to \real^n$ is discretizable if it induces a discrete operator $\hat{\layer}^X: \inrspace \to \real^n$, which seeks to replicate the behavior of the original map but depends only on the input's values at $X$. Ideally we would be able to design maps that are truly invariant to the choice of discretization, meaning that all its discrete operators are identical. But such idealized discretization invariance is only possible on the most trivial function spaces, and thus a more practical definition of discretization invariance is necessary:

\begin{definition}\label{define:discretization_invariance}
A discretizable map $\layer: \inrspace \to \real^n$ is discretization invariant if there exists constant $k > 0$ such that for every discretization $X$ and function $f$, $\norm{\layer[f] - \hat{\layer}^X[f]}_1 \le kV(f)D(X)$ where $D(X)$ is the discrepancy of $X$.
A map $\bar{\layer}: \inrspace \to \inrspace[n]$ is discretization invariant if $\bar{\layer}[\cdot](x)$ is uniformly discretization invariant for all $x \in \domain$.
\end{definition}

% \subsection{Discretization Invariance}
% Designing a map that yields identical outputs under arbitrary discretizations is impossible on all but the most trivial function spaces. Thus we focus instead on an approximate notion of discretization invariance: upper bounding the deviation of a map's outputs under any two discretizations.
% , and (2) establishing convergence of $\hat{\layer}^{X_N}$ to $\layer$ under particular sequences $\{X_N\}_{N \in \natural}$ of discretizations. We use the first notion to define discretization invariant maps, then characterize sequences of discretizations under which such maps converge.
%where $I_X$ is the ideal of $L^2(\domain)$ that vanishes at each $x \in X$

This definition establishes an upper bound on the deviation between any two discretizations by a simple application of the triangle inequality. The discrepancy of a discretization is lower for dense, evenly distributed points. 
For a 1D point set on domain $\Omega = [a,b]$, it is given by:

\begin{equation}
D(\{x_i\}_{i=1}^N) = \sup_{a\leq c\leq d\leq b} \left\vert {\frac {\left|\{x_{1},\dots ,x_{N}\}\cap [c,d]\right|}{N}}-{\frac {d-c}{b-a}}\right\vert .
\end{equation}

See Appendix \ref{app:qmc_background} for general definitions of discrepancy.
The product of variation and discrepancy is precisely the upper bound in the celebrated Koksma–Hlawka inequality, which bounds the difference between the integral of a function $h \in L^2(\domain)$ and its sample mean on any point set $X \subset \domain$:

\begin{equation}
\left|\mcsum h(x') - \intdomain h(x)\,dx \right|\leq V(h)\,D(X). \label{eqn:kh_inequality}
\end{equation}

\subsection{Discretization Invariant Layers}

This naturally leads to a family of \textbf{discretization invariant (DI) layers} specified as the integral of a parametric map over an input function:
\begin{equation}
\layer_{\phi}: f \mapsto \intdomain H_{\phi}[f](x) dx ,
\end{equation}

whose discrete operator is simply its sample mean:

\begin{equation}
\hat{\layer}^X_{\phi}: f \mapsto \mcsum[x] H_{\phi}[f](x) . \label{eq:discrete_map1}
\end{equation}

The parametric map $H_{\phi}[f]$ can have two forms:
\begin{itemize}[leftmargin=1.4em]
    \item In vector-valued DI layers, $\hat{\layer}^X_{\phi}: \inrspace \to \real^n$ and $H_{\phi}[f](x) = h_{\phi}(x, f(x)) \in \real^n$. Such layers include global pooling and learned inner products, and could be used as one of the final layers in a classification network.
    \item In function-valued DI layers, $\hat{\layer}^X_{\phi}: \inrspace \to \inrspace[n]$ and $H_{\phi}[f](x): x' \mapsto h_{\phi}(x, x', f(x), f(x')) \in~\real^n$ for all $x' \in \domain$. Such layers include continuous convolutions, deconvolutions, and self-attention layers.
\end{itemize}

In each case, $h_\phi$ must be bounded and continuous in all its variables so that outputs remain of bounded variation (hence satisfying our definition of discretization invariance). $h_\phi$ must also be differentiable \wrt $\phi$ to enable backpropagation, and Gateaux differentiable \wrt $f$ to make its gradients discretization invariant, as we discuss in Section \ref{sec:backward_convergence}. We consider even more general forms of DI layers in Appendix \ref{app:general_forms}.

The upper bound in Eq. \eqref{eqn:kh_inequality} points to two levers for reducing the approximation error of a given layer. We can design $H_\phi$ to have low variation for most $f$, for example by imposing Lipschitz regularization, but there is a tradeoff between reducing variation and maintaining the layer's ability to capture information relevant to the downstream task. We can also choose $X$ to have low discrepancy by using fast methods for generating low discrepancy sequences. Obtaining the sample mean in this way is called quasi-Monte Carlo (QMC), a numerical integration method with favorable convergence rates compared to standard Monte Carlo \citep{caflisch1998monte}. In this work, we assume that the discretization is chosen once without \textit{a priori} knowledge of the function, making QMC optimal. But we note that tighter bounds on the approximation error can be attained by choosing a different discrete operator based on quadrature, and/or by refining the discretization after initial evaluations of the operator. Replacing $1/|X|$ in the sample mean with quadrature weights will achieve better convergence when the discrepancy of $X$ is high, and adaptive quadrature updates the discretization to attain specific error bounds at inference time, which can be valuable in applications requiring robustness or verification. Additionally rejection sampling can be used for the Monte Carlo method under non-uniform measures.

% Then the tightest bounds can be achieved by generating low discrepancy discretizations using quasi-Monte Carlo (QMC), a numerical integration method with favorable convergence rates compared to standard Monte Carlo \citep{caflisch1998monte}.

A \textbf{discretization invariant network} (\modelname) is a directed acyclic graph of DI layers as well as pointwise layers.\footnote{As its first layer a \modelname that maps vectors to functions may require a vector decoder: a map $\real^n \to \inrspace$ specified by $n$ elements of a basis on $\inrspace$ or interpolation of $n/c$ points (see Appendix \ref{app:specs} for examples).} A pointwise layer is a bounded continuous scalar function applied to each point in $\domain$, and includes nonlinear activations, batch normalization, as well as addition or concatenation of NFs. Since pointwise layers preserve an NF's property of bounded variation, DI-Net is discretization invariant. A prototypical \modelname for classification might consist of NF-valued DI layers separated by normalization and activation layers, and end with a vector-valued DI layer followed by softmax.

\subsection{Convergence under Equidistributed Discretizations} \label{sec:backward_convergence}

From our definition of discretization invariance, it is clear that the approximation error of DI layers converges to 0 under sequences of discretizations whose discrepancy tends to 0. We call such a sequence of discretizations an \emph{equidistributed} discretization sequence. A simple way to generate an equidistributed discretization sequence is to truncate any equidistributed sequence of points to the first $N$ terms for each $N \in \natural$. Quasi-Monte Carlo (QMC) sampling efficiently generates equidistributed sequences on a wide range of domains.

We can also ask whether the \modelname's learning algorithm is also discretization invariant: specifically whether its discretized gradients are convergent, and to what value they converge. Consider a vector-valued DI layer $\layer_\phi$. The derivatives of its output \wrt its weights $\phi=(\phi_1,\dots,\phi_K)$ can be shown to converge under any equidistributed discretization sequence $\{X_N\}_{N \in \natural}$.
Denote $\phi+\tau e_k=(\phi_1,\dots, \phi_{k-1},\phi_k+\tau,\phi_{k+1},\dots,\phi_K)$.
\begin{align}
\limNinf \pder{\phi_k} \hat{\layer}_{\phi}^N[f] &= 
 \limNinf \pder{\phi_k} \left( \mcsumN H_{\phi}[f](x) \right) \\
 &= \limNinf \limTzero \left( \frac{1}{\tau N} \sum_{j=1}^N H_{\phi+\tau e_k}[f](x) - H_{\phi}[f](x) \right) \\
&= \limTzero \frac{1}{\tau} \intdomain H_{\phi+\tau e_k}[f](x) - H_{\phi}[f](x) \, dx \label{eqn:mooreosg}\\
&= \limTzero \frac {\layer_{\phi+\tau e_k}[f]- \layer_{\phi}[f]}{\tau} \\
&= \pder{\phi_k} \layer_{\phi} [f],
\end{align}

where \eqref{eqn:mooreosg} uses the Moore-Osgood theorem. %follows by \eqref{eq:vcalc3} and
The case of a function-valued DI layer proceeds identically, showing this condition holds for the derivatives at each point on the output domain.
Thus the discretized gradient converges to the Jacobian of $\layer_{\phi}$ \wrt each parameter, which is finite by differentiability and boundedness of $H_\phi$.

% \begin{equation}
%  \limNinf \pder{\phi_k} \hat{\layer}^{X_N}_{\phi} \equiv \pder{\phi_k} \layer_{\phi} ,
% \end{equation}

Describing the derivative of the layer's output \wrt the input function is more nuanced, since pointwise derivatives are not sufficient to represent backpropagation in the continuous case. We must instead use the Gateaux derivative, which describes the linear change in a map between functions given an infinitesimal change in the input function. We prove the following in Appendix \ref{proof:bump_lemma}:

% \begin{equation}
% \pder{\psi_k} \left( \layer_{\phi} \circ \layer_{\psi} [f] \right) = d\layer_{\phi}[\layer_{\psi}; \epsilon] \pder{\psi_k} \left( \layer_{\psi} [f] \right)
% \end{equation}
%\left[ \intdomain H_{\psi} [f](x) dx \right]

% In other words backpropagation requires taking
% draw a connection between the Gateaux derivative (the derivative of a map on functions \wrt its input function) and the gradients and a derivative \wrt a vector.

\begin{lemma} \label{lemma:bump_funct}
For every $f \in \inrspace[1]$ and fixed $\tilde{x} \in \domain$, we can design a sequence of bump functions $\{\psi^N_{\tilde{x}}\}_{N \in \natural}$ which is 1 in a small neighborhood around $\tilde{x}$ and vanishes at each $X_N \backslash \{\tilde{x}\}$, such that:

\begin{equation}
\limNinf \pder{f(\tilde{x})} \hat{\layer}^{X_N}_{\phi}[f] = \limNinf d\layer_{\phi}[f; \psi^N_{\tilde{x}}] , \label{eqn:bumpderiv}
\end{equation}

where $d\layer_{\phi}[f; \psi^N_{\tilde{x}}] = \lim_{t \to 0} \frac{\layer_{\phi}[f + t\psi^N_{\tilde{x}}] - \layer_{\phi}[f]}{t}$ is the Gateaux derivative when $f$ is perturbed in the direction of $\psi^N_{\tilde{x}}$.
\end{lemma}

% The Gateaux derivative of $\layer_{\phi}$ when  and its limit exists due to Gateaux differentiability of $H_\phi$

% In \appref{proof:autodiff} we show these results naturally extend to general DI layers via the Gateaux derivative.
Having established convergence of the discretized gradients \wrt both parameters and inputs, we then note that many common loss functions and regularizers generalize naturally to the continuous domain as bounded continuous maps $\inrspace \to \real$ (\eg, L2 regularization) or $\inrspace \times \inrspace \to \real$ (\eg, mean squared error), so discretization invariance readily extends to gradients with respect to loss terms.
Finally, using the chain rule for Gateaux derivatives, we can then show that all discretized gradients obtained during backpropagation through a \modelname are convergent. These results lead to the following statement (detailed proof in Appendix \ref{proof:autodiff}):

\begin{theorem}\label{prop:autodiff}%
A \modelname permits backpropagation with respect to its input and all its learnable parameters. The discretized gradients converge under any equidistributed discretization sequence. %Under an equidistributed discretization sequence, the gradients of each layer converge to the appropriate derivative under the measure on $\domain$.
\end{theorem}

% For concreteness, we briefly describe the case for a layer $\layer_\phi: L^2(\domain) \to \real$.

% Thus the gradients of the discretized layer \wrt its parameters $\phi$ and inputs $f(\tilde{x})$ converge to gradients of the continuous map.
% Thus the discretized derivative of $\layer_\phi$ \wrt its inputs $f(\tilde{x})$ converges uniformly in $N$, and its limit is the limit of the directional derivatives of $\layer_\phi[f]$ \wrt a decreasing sequence of bump functions at $\tilde{x}$, which is finite due to Gateaux differentiability of $H_\phi$.

%the discretized gradients through multiple layers converge to the chained Gateaux derivatives. Thus backpropagation through the entire \modelname is convergent under equidistributed discretization sequences.

\section{Universality of \modelnames}

We observed that the approximation error can be made arbitrarily small by choosing a discretization with sufficiently small discrepancy. Functions of bounded variation are piecewise smooth, hence they can be represented as the integral of some function. Our parameterization of DI layers as integrals
\modelnames are universal approximators in the following sense:

\begin{theorem}\label{prop:universal}%
For every Lipschitz continuous map $\map: \inrspace \rightarrow \inrspace[n], \; c,n \in \natural$, there exists a \modelname that approximates it to arbitrary accuracy \wrt a finite measure $\nu$ on $\inrspace$.
As a corollary, every Lipschitz continuous map $\inrspace \to \real^n$ or $\real^n \to \inrspace$ can also be approximated by some \modelname.
\end{theorem}
% \footnote{For simplicity we only characterize single-channel fields (i.e. integrable functions) here, and in \appref{app:proofs} we extend our result to the multi-channel case.}
Appendix \ref{proof:universal_approx} provides a full proof. Here we provide a high-level sketch of the $\inrspace[1] \to \inrspace[1]$ case, where there exists a \modelname $\gT$ that satisfies:
\begin{equation}
\int_{\inrspace[1]} \, \norm{\map(f) - \network(f)}_{L^1(\domain)} \nu(df) < \eps .
\end{equation}

\begin{enumerate}[leftmargin=1.4em]
\item Fix a discretization $X$ of sufficiently low discrepancy to approximate any function in $\inrspace[1]$ to desired accuracy. Let $N=|X|$, the number of points in the discretization.
\item Let $\pi$ be a projection operator that maps every function $f \in \inrspace[1]$ to $\real^N$ by selecting its $N$ values along the discretization. Through $\pi$, the measure $\nu$ on $\inrspace[1]$ induces a measure $\mu$ on $\real^N$.
\item Any function in $L^2(\real^N)$ can be approximated by covering the volume under the graph of the function with almost disjoint rectangles, and then at inference time summing the heights of the rectangles at the given $\real^N$ input. A multilayer perceptron (MLP) can then approximate this rectangle cover to arbitrary accuracy with sufficiently steep slopes at their boundaries \citep{lu2017expressive}.
\item The map $J: \pi f \mapsto \map[f](x)$ is in $L^2(\real^N)$ for each $x \in X$. We can mimic the MLP from step 3 with a \modelname that specifies the desired connections on $\domain$ using element-wise products with cutoff functions and linear combinations of channels. The cutoff functions extract the input values along $X$ into separate channels, and the weights of the channels match the weights of the hypothetical MLP, projecting the result to the appropriate $x$. We present the design of such a network in Algorithm \ref{algo:inr_approx1}, with additional details in the Appendix.
\item We repeat this construction $N$ times to specify values at each of the $N$ output points in $X$, and map all other output points to the value of the closest specified point. Then we have fully specified the desired behavior of $f \mapsto \map[f]$ to desired accuracy \wrt the measure $\nu$.
\end{enumerate}

\begin{algorithm}[ht]
\caption{\modelname approximation of $J: \pi f \mapsto \map[f](x)$} \label{algo:inr_approx1}
% \emph{Setup} \\
\KwIn{target function $J$, discretized input $\fvec=\{(\pi f)_k\}_{k=1}^n$, $L_1$ tolerance $\eps/2$}
Choose rectangles $R_i^+=[a_{i1}^+,b_{i1}^+] \times \dots \times [a_{in}^+,b_{in}^+] \times [\zeta_i^+,\zeta_i^+ + y_i^+]$ covering the graph of $J^+(\fvec) \defeq \max(0,J(\fvec))$ and rectangles $R^-$ covering the graph of $J^-(\fvec) \defeq \max(0,-J(\fvec))$ (precise conditions stated in (\ref{eqn:symmetric_diff}))\; %R^+=\{R_i^+\}_{i=1}^{|R^+|}
$\delta^+ \gets \frac{1}{2} \left( 1-(1 -\frac{\eps}{8} \left( \norm{J^+}_1 + \frac{\eps}{16} \right)^{-1} )^{1/n} \right)$\;
$\delta^- \gets \frac{1}{2} \left( 1-(1 - \frac{\eps}{8} \left( \norm{J^-}_1 + \frac{\eps}{16} \right)^{-1} )^{1/n} \right)$\;
\vspace{1pt}

% \emph{Inference} \\
% \KwIn{}
$x \gets (0,0,1,0,0)$\;
\For{\rm{rectangle} $R_i^+ \in R^+$}{
    \For{\rm{dimension} $k \in 1:n$}{
        $x \gets (\fvec_k - b_{ik}^+ + \delta(b_{ik}^+ - a_{ik}^+), \fvec_k - a_{ik}^+, x_3, x_4, x_5)$\;
        $x \gets \texttt{ReLU}(x)$\;
        $x \gets (\delta - x_1, \delta - x_2, x_3, x_4, x_5)$\;
        $x \gets \texttt{ReLU}(x)$\;
        $x \gets (0,0, x_3(x_1 - x_2)/\delta, x_4, x_5)$\;
    }
    $x \gets (0,0, 1, y_i^+ x_3 + x_4, x_5)$\;
}

\For{\rm{rectangle} $R_i^- \in R^-$}{
    \For{\rm{dimension} $k \in 1:n$}{
        $x \gets (\fvec_k - b_{ik}^- + \delta(b_{ik}^- - a_{ik}^-), \fvec_k - a_{ik}^-, x_3, x_4, x_5)$\;
        $\dots$\;
    }
    $x \gets (0,0, 1, x_4, y_i^- x_3 + x_5)$\;
}
\KwOut{$x_4 - x_5$}
\end{algorithm}

This construction suggests that the complexity of representing a given operator is linked to the minimum number of rectangles necessary to cover its pointwise graph ($J$ in step 4) with desired tolerance, and the degree to which this graph is similar for neighboring points in the output domain. Intuitively, this means that it is easier to represent operators in which each point in the output is influenced by relatively few points in the input domain, and where there is shared structure in this dependency. Thus \modelname design can be guided by strategies for imposing structure on how different points influence each other: for example, convolutions produce translation invariant outputs that depend on the input function locally; attention layers produce outputs using sparse disconnected regions of the input function; Fourier neural operators \citep{li2020fourier} produce outputs driven by low frequency patterns in the input function.

\section{Design and Implementation of \modelnames}\label{network_design}

\modelnames encompass a very large family of neural networks: we only specify the architecture as a directed acyclic graph, and DI layers include a wide variety of network layers. We first discuss their application to neural fields, which can be treated as function spaces to achieve parameterization-agnostic learning. We then discuss the connection between \modelnames and neural operators \citep{kovachki2021neural, Lu_2021}, which can learn general maps between function spaces but in practice are designed to solve partial differential equations. Next we show that \modelnames also encompass continuous adaptations of networks designed on discrete domains such as convolutional neural networks (CNNs). In the same way that neural fields extend signals on point clouds, meshes, grids, and graphs to a compact metric space, \modelnames extend networks that operate on discrete signals by converting every layer to an equivalent discretizable map. Lastly we describe training and inference pipelines for \modelnames. % We make this connection concrete in the case of CNNs below.%, and provide a proof of concept on simple vision tasks with NFs that encode visual data.
% almost every neural network layer on a discrete domain is the induced operator of some discretizable map on a continuous domain, hence  by converting the underlying domain to the equivalent compact metric space.
% With the emergence of neural fields as continuous representations of traditionally discrete signals,
% Previous work offers strong evidence for particular design choices for certain classes of problems. For example, \modelnames include DeepONets and neural operators \citep{kovachki2021neural, Lu_2021}, which are designed to solve partial differential equations. PointNet seek permutation invariance and geometrical information on point sets \citep{Qi_2017_CVPR, pointnet2017} %learn operators that emerge from
%in theory are capable of learning general maps between function spaces but in practice 
%  that can inherit design principles from continuous networks as well as discrete networks.

% to demonstrate the applicability of such methods to more general tasks on neural fields. How should we design \modelnames for tasks that are traditionally the realm of discrete networks?

% , but instead use relatively simple parameterizations that capture inductive biases appropriate to the task without significantly compromising expressivity.

% With the rise of neural fields in many vision applications, we choose to investigate the properties of \modelnames when they are designed like convolutional neural networks. %

\subsection{Learning on Neural Fields}
\label{sec:neural_fields}

The parameterization of neural fields in terms of multi-layer perceptrons guarantees that they produce integrable functions of bounded variation over a compact domain, which is the function space we considered throughout Section \ref{sec:di_learning}. Specifically, a $d$-dimensional neural field with $c$ channels represents a vector-valued function from a $d$-dimensional domain $\domain$ to $\real^c$. An occupancy network \citep{occupancy_net} is 3-dimensional and has 1 channel. NeRF \citep{nerf} is 5-dimensional (3 world coordinates and 2 view angles) and has 4 channels.
We assume that neural field evaluation yields pointwise values, \ie the point spread function of the underlying signal is a delta function. It is possible to accommodate non-trivial point spread functions, but this is beyond the scope of this work.

By treating neural fields as functions, this deep learning framework is parameterization-agnostic -- the behavior of DI layers does not depend on the parameterization of the NFs that it takes as input. This property allows \modelnames to be applied to a mixture of neural field types, which is not possible with most approaches for learning on neural fields such as hypernetworks or modulation-based networks.

The action of a function-valued DI layer produces a neural field with parameters $(\theta, \phi)$, and retaining all the \modelname parameters can lead to an output NF with greatly inflated parameters, making it inefficient to evaluate repeatedly. This is unsuitable for applications requiring the output to be sampled several times at different discretizations, or where the output NF needs to be stored in its entirety. To solve this problem we can reparameterize the output NF by storing only the last few layers of the \modelname alongside the discretized input activations, and adapt the discretizations as needed in these last few layers only. This approach is reminiscent of strategies that use discretized outputs of a (non-DI) neural network as parameters of an output neural field \citet{vora2021nesf}.
% \modelname should be used cautiously in privacy-sensitive applications, as the model's outputs may expose the network and upstream data.

\subsection{Connection to Neural Operators}

\begin{figure*}[t]
\centering
\includegraphics[width=0.9\textwidth]{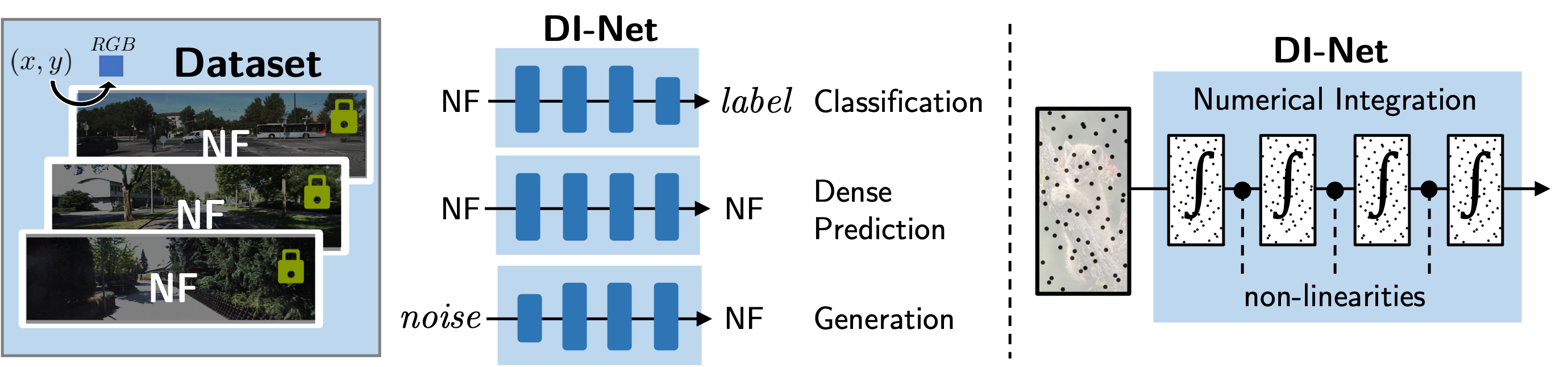}
\caption{A discretization invariant network (\modelname) treats neural fields as vector-valued functions. It evaluates an input field on a point set (discretization) which is used to perform numerical integration throughout the network. \modelnames are interoperable between all types of NFs and can be trained on a wide range of tasks.}
\label{fig:framework}
\vspace{-2pt}
\end{figure*}
% Here we show the example of applying a network to implicit representations of 2D RGB images, but they can be readily generalized to

Neural Operators \citep{kovachki2021neural} are defined in terms of a pointwise lifting operator $P: \real^{c_{in}} \to \real^{d_0}$, a pointwise projection operator $Q: \real^{d_T} \to \real^{c_{out}}$, and kernel operators $\{K_t\}_{t=1}^T, \, K_t: L^2(\Omega; \real^{d_{t-1}}) \to L^2(\Omega; \real^{d_t})$ associated with non-linear activations $\sigma_t: \real^{d_{t-1}} \to \real^{d_t}$, weight matrices $W_t \in \real^{d_{t-1}} \times \real^{d_t}$ and biases $b_t: \Omega \to \real^{d_t}$. Then the network becomes $G: \real^{c_{in}} \to \real^{c_{out}}$:
\begin{equation}
G = Q \circ \sigma_T(W_T + K_T + b_T) \circ \cdots \circ \sigma_1(W_1 + K_1 + b_1) \circ P ,
\end{equation}

Because these layers are all DI layers or pointwise layers, neural operators represent an instantiation of \modelnames, and are an effective choice if the task involves solving partial differential equations. Alternative designs become necessary when the input and output domains do not match or when the task can benefit from multi-scale architectures.

\subsection{Convolutional \modelnames}

We describe how to design \modelnames that generalize the behavior of convolutional neural networks to the continuous domain. The resulting convolutional \modelname can be initialized directly with the weights of a pre-trained CNN, as we investigate in Section \ref{app:stability}. We note that there is a rich literature of methods for designing improved continuous convolutions \citep{pointnet2017,uber_contconv,boulch_contconv} which would be helpful for scaling convolutional \modelnames, although we do not use these methods so that the connection between \modelnames and discrete CNNs is more direct.
% we characterize the behavior of convolutional \modelnames initialized directly with the weights of a pre-trained CNN.

\begin{figure*}[htbp]
\centering
\includegraphics[width=0.9\textwidth]{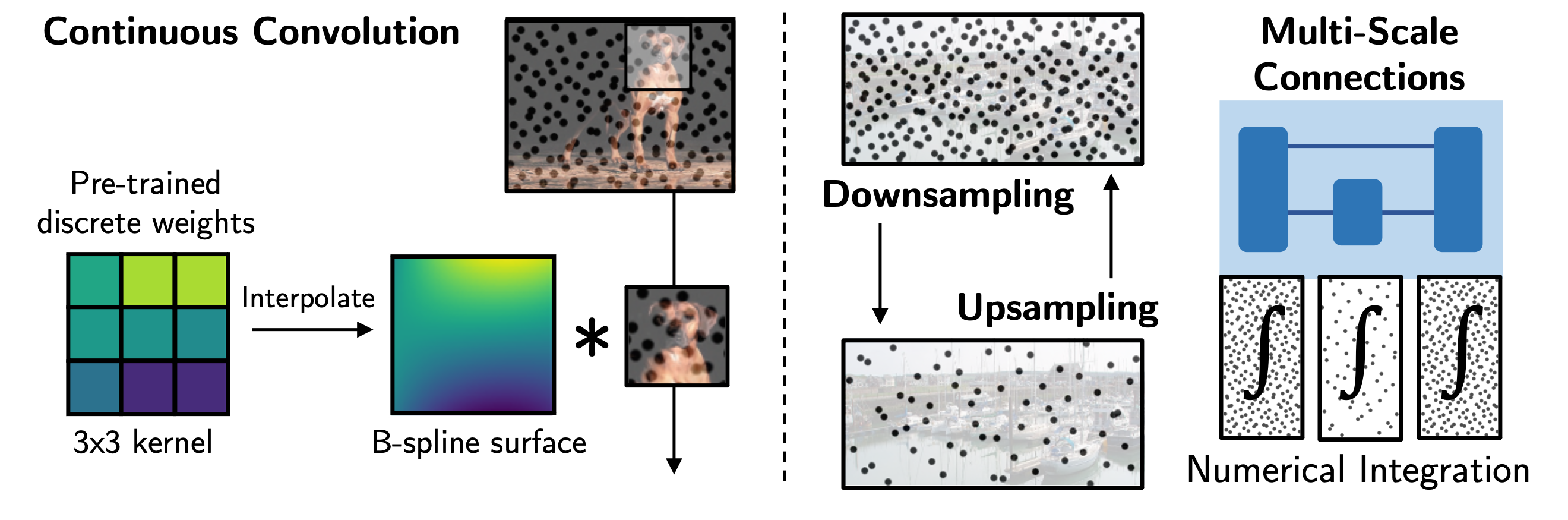}
\caption{Convolutional \modelnames generalize convolutional neural networks to arbitrary discretizations of the domain. Low discrepancy point sets used in quasi-Monte Carlo integration are amenable to the multi-scale structures often found in discrete networks. Convolutional \modelnames may be initialized directly from pre-trained CNNs.}
%Only pointwise nonlinearities and polynomial functions on $\domain$ are needed to make a \modelname a universal approximator.  initializable from pre-trained discrete networks, we design continuous variants of discrete layers including convolution, pooling, upsampling, and tokenization.}
\label{fig:layers}
\vspace{-2pt}
\end{figure*}

\paragraph{Convolution layer}
For a measurable $S \subset \domain$ and a polynomial basis $\{p_j\}_{j \ge 0}$ that spans $L^2(S)$, $S$ is the support of a polynomial convolutional kernel $K_\phi: \domain \times \domain \to \real$ defined by:
\begin{equation}
K_\phi(\vx,\vx') = \begin{cases}
\sum_{j=1}^{n} \phi_j p_j(\vx-\vx')^j &\text{if $\vx-\vx' \in S$} \\
0 &\text{otherwise.} \end{cases}
\end{equation}
\noindent for some chosen $n \in \natural$. %choice of polynomial basis $\{p_{k,\phi}\}_{k=0}^{\infty}$ up to order $N_p$ and 
% such as a zero-centered $d$-ball or orthotope.
A convolution is the linear map $\layer_\phi: \inrspace[1] \to \inrspace[1]$ given by:
\begin{equation}
\layer_\phi[f] = \int_\domain K_\phi(\cdot, \vx') f(\vx')d\vx' . \label{eqn:convolution}
\end{equation}

If $\cin$ is the number of input channels and $\cout$ the number of output channels, then the convolution layer can aggregate information across channels using $\cin \cout$ different filters.

An MLP convolution is defined similarly except the kernel becomes $\tilde{K}_{\phi}(\vx,\vx') = \mathrm{MLP}(\vx-\vx'; \phi)$ in the non-zero case.
While MLP kernels are favored over polynomial kernels in many applications due to their expressive power \citep{uber_contconv}, polynomial bases can be used to construct filters satisfying desired properties such as group equivariance \citep{cohen2016group,cohen2016steerable}, $k$-Lipschitz continuity, or boundary conditions.

The input and output discretizations of the layer can be chosen independently, allowing for padding or striding. The input discretization fully determines which points on $S$ are evaluated.
To transfer weights from a discrete convolutional layer, $K$ can be parameterized as a rectangular B-spline surface that interpolates the weights (Fig. \ref{fig:layers} left). We use a 2nd order B-spline for $3\times3$ filters and 3rd order for larger filters. We use deBoor's algorithm to evaluate the spline at intermediate points.
% To replicate the behavior of a discrete convolution layer with odd kernel size, $S$ is zero-centered. For even kernel size, we shift $S$ by half the dimensions of a pixel. 

% Strided convolution is implemented by simply truncating the output discretization to the desired factor.
% Different padding behaviors from the discrete case are treated differently. Zero-padding is replicated by scaling $\layer[f](x)$ by $\frac{|(S+x) \cap \domain|}{S+x}$ where $S+x$ is the kernel support $S$ translated by $x$.
% For reflection padding, the value of the NF at points outside its domain are calculated by reflection.
% For no padding, the NF's domain is reduced accordingly, dropping all sample points that are no longer on the new domain.

\paragraph{Pointwise channel mixing}
Linear combinations of channels can be used similarly to $1\times1$ convolutional layers in discrete networks. For learned scalar weights $W_{ij}$ and biases $b_j$ and all $x \in \domain$:
\begin{equation}
g_j(x) = \sum_{i=1}^{\cin} W_{ij} f_i(x) + b_j .
\end{equation}

% These weights and biases can be straightforwardly copied from a $1\times1$ convolutional layer to obtain the same behavior.
% One can also adopt a normalized version, sometimes used in attention-based networks:
% \begin{equation}
% W_{ij} = \frac{w_{ij}}{\sum_{k=1}^{\cin} w_{kj}}
% \end{equation}

\paragraph{Normalization}
All forms of layer normalization readily generalize to the continuous setting by estimating the statistics of each channel with numerical integration, then applying point-wise operations. 
These layers typically rescale each channel to have some learned or fixed mean $m_i$ and standard deviation $s_i$.
\begin{gather}
\mu_i = \intdomain f_i(x)dx \\
\sigma^2_i = \intdomain f_i(x)^2 dx - \mu_i^2 \\
g_i(x) = \frac{f_i(x) - \mu_i}{\sigma_i + \epsilon} \times s_i + m_i ,
\end{gather}
where $\eps>0$ is a small constant.
Just as in the discrete case, $\mu_i$ and $\sigma_i$ can be a moving average of the means and standard deviations observed over the course of training different NFs, and they can also be averaged over a minibatch of NFs (batch normalization) or calculated per datapoint (instance normalization). Mean $m_i$ and standard deviation $s_i$ can be learned directly (batch normalization), conditioned on other data (adaptive instance normalization), or fixed at 0 and 1 respectively (instance normalization).
% Normalization layers are not strictly discretization invariant since the output can be poorly behaved for small values of $\sigma_i$, but the convergence condition still holds, \ie normalization is convergent under an equidistributed sequence of discretizations.

\paragraph{Multi-scale architectures}
Many discretizations permit multi-scale structures by subsampling the discretization, and QMC is particularly conducive to such design.
Under QMC, downsampling is efficiently implemented by truncating the list of coordinates in the low discrepancy sequence to the desired number of terms, as the truncated sequence is itself low discrepancy. 
Similarly, upsampling can be implemented by extending the low discrepancy sequence to the desired number of terms, then performing interpolation by copying the nearest neighbor(s) or applying some (fixed or learned) kernel.
Residual or skip connections can also be implemented efficiently since downsampling and upsampling are both specified with respect to the same discretization (\figref{fig:layers} right).
%\footnote{In randomized QMC, the coordinates dropped in downsampling layers need to be remembered.}

% Using pre-trained discrete models as a starting point to refine \modelnames alleviates the need for large NF datasets as the network can simply be fine-tuned.
% Further investigation is required to find stable ways to leverage this equivalence, as we discuss in \appendixref{app:stability}. 

\subsection{Training \modelnames}
The pipeline for training \modelnames is similar to that for training discrete networks, except that input and/or output discretizations should be specified.
When training a \modelname classifier, the input discretization may be specified manually or sampled from a low discrepancy sequence to perform QMC integration. The QMC discretization can be either deterministic or pseudorandom, and can accelerate computation when the same discretization is used for multiple network layers or all functions in a minibatch.
When training \modelnames for dense prediction (\eg segmentation), the output discretization should be chosen to match the coordinates of the ground truth labels. Any input discretization can be chosen. Unless otherwise stated we set it equal to the output discretization. At inference time, the network can be evaluated with any output discretization, making the output in effect a neural field.
We outline steps for training a classifier and dense prediction \modelname in Algorithms \ref{alg:classifier} and \ref{alg:dense} to illustrate their similarity to the discrete case, besides the specification of the discretization. \blue{Similar pipelines could be used to train \modelname on other tasks including generative modeling, inverse problems, or representation learning. In most cases where discrete networks are sufficient, the gains for using \modelnames may be limited.}

\subsection{Computational complexity}
In general \modelname's time and memory complexity both scale linearly with the number of sample points (regardless of the dimensionality of $\domain$), as well as with network depth and width. Implemented naively, the computational cost of the continuous convolution is quadratic in the number of sample points, as it must calculate weights separately for each neighboring pair of points. But we can reduce this to a linear cost by specifying a $N_{\rm{bin}}$ Voronoi partition of the kernel support $B$, then using the value of the kernel at each seed point for all points in its cell. Thus the kernel need only be evaluated $N_{\rm{bin}}$ times regardless of the number of sample points. Additionally $N_{\rm{bin}}$ can be modified during training and inference.

\blue{Thus \modelnames are similar in computational complexity to discrete neural networks, with the added flexibility of being able to sample in non-grid patterns that can converge more efficiently. Although networks that operate directly on the parameters of the neural field can access the entire NF in constant time, the NF itself needs more parameters to capture finer resolutions, and in practice many downstream tasks can be solved at a much coarser resolution than would be captured by the NF. Thus such networks do not necessarily scale better than \modelnames, and would likely suffer from higher computational costs when there is a mismatch between the resolution of the NF and the resolution needed for the task.}

\section{Experiments}\label{experiments}

We apply convolutional \modelnames to toy classification (NF$\to$scalar) and dense prediction (NF$\to$NF) tasks, and analyze its behavior under different discretizations. Our aim is not to compete with discrete networks on these tasks, but rather to illustrate the learning behavior of CNNs compared to \modelnames with the equivalent architectures, without additional techniques or types of layers. We demonstrate that convolutional \modelnames learn maps that often generalize to unseen discretizations, whereas maps learned by pre-trained CNNs are highly sensitive to perturbations of the discretization. %feasible class of models for discretization invariant learning on NFs

\subsection{Neural Field Classification}\label{sec:classifier}

\paragraph{Data}
We perform classification on a dataset of 8,400 NFs fit to a subset of ImageNet1k \citep{imagenet}, with 700 samples from each of the 12 superclasses in the \texttt{big\_12} dataset \citep{robustness_library}, which is derived from the WordNet hierarchy. 
We fit SIREN \citep{siren} to each image in ImageNet using 5 fully connected layers with 256 channels and sine non-linearities, trained for 2000 steps with an Adam optimizer at a learning rate of $10^{-4}$. It takes coordinates on $[-1,1]^2$ and produces RGB values in $[-1,1]^3$.
We fit Gaussian Fourier feature \citep{randfourfeats} networks using 4 fully connected layers with 256 channels and ReLU activations. It takes coordinates on $[0,1]^2$ and produces RGB values in $[0,1]^3$.
The average pixel-wise error of SIREN is $3\cdot10^{-4} \pm 2\cdot10^{-4}$, compared to $1.6\cdot10^{-2} \pm 8\cdot10^{-3}$ for Gaussian Fourier feature networks.
The difference in quality is visible at high resolution, but indistinguishable at low resolution.

\paragraph{Architecture and Baselines}
We train \modelnames with 2 and 4 MLP convolutional layers, as well as CNNs with similar architectures. We also train an MLP that predicts class labels from SIREN parameters, and a ``non-uniform convolution'' \citep{nuft_cnn} that applies a non-uniform Fourier transform to input points (sampled with QMC) to map them to grid values, then applies a 2-layer CNN.
\modelname-2 uses strided MLP convolutions, a global average pooling layer, then two fully connected layers. \modelname-4 adds a residual block with two MLP convolutions after the strided convolutions. We train all models with an AdamW optimizer \citep{adamw}.
The architecture of the MLP is 3 fully connected layers with 128 hidden units each and ReLU activation separated by batch normalization. It learns to map the SIREN parameters to the class label. We found that the model's loss curve becomes unstable after 3000 iterations so we reduce the number of iterations to 2000.
The non-uniform CNN applies the non-uniform Fourier transform \citep{nuft} followed by inverse Fast Fourier Transform to resample the input signal to the grid. It then feeds the result to a 2-layer CNN to perform classification.

\begin{figure}[htbp]
% \vspace{-2pt}
\centering
\includegraphics[width=.4\linewidth]{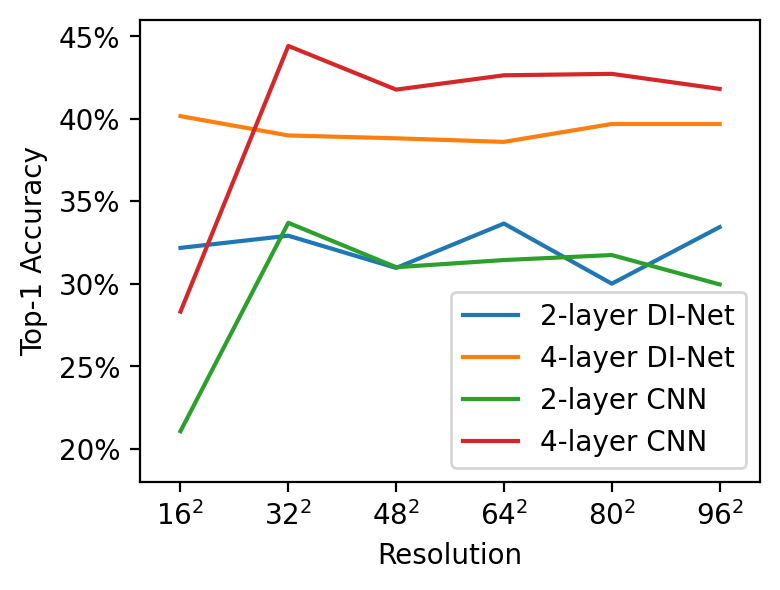}
\vspace{-8pt}
\caption{Classifier performance with different resolutions at test time.}
\label{fig:classify}
\vspace{-3pt}
\end{figure}

\paragraph{Training and Evaluation}
For each class, we train \modelname on 500 SIRENs \citep{siren} and evaluate on 200 Gaussian Fourier feature networks \citep{randfourfeats}. Training and testing on different NF types is not possible for the MLP approach, so it is evaluated on SIREN images instead.
During training, we augment with noise, horizontal flips, and coordinate perturbations.

Each network is trained for 8K iterations with a learning rate of $10^{-3}$.
In training, the CNNs sample neural fields along the $32 \times 32$ grid. \modelnames and the non-uniform network sample $1024$ points generated from a scrambled Sobol sequence (QMC discretization).
We evaluate models with top-1 accuracy at the same resolution as well as at several other resolutions.
A 4-layer \modelname performs a forward pass on a batch of 48 images in $96 \pm 4$ms on a single NVIDIA RTX 2080 Ti GPU.

\paragraph{Performance}
The MLP and the non-uniform method significantly underperform \modelname, with 13.9\% and 28.9\% accuracy respectively compared to 32.9\% for our 2-layer network. At $32 \times 32$ resolution, \modelnames somewhat underperform their CNN counterparts, and this performance gap is larger for deeper models. However, our discretization invariant model better generalizes to images of different resolutions than CNNs (Fig. \ref{fig:classify}), particularly at lower resolutions.

\begin{table}[htbp]
\centering
\caption{Accuracy of 2-layer \modelname under various discretizations.}
\label{tbl:high_discrepancy}
\begin{tabular}{lc}
\hline
\textbf{Train$\to$Test Disc.} & \textbf{Accuracy}\\
\hline
QMC$\to$QMC & \textbf{32.9\%} \\
Grid$\to$Grid & 30.5\% \\
Shrunk$\to$Shrunk & 30.3\% \\
\hline
QMC$\to$Grid & 27.1\% \\
Grid$\to$QMC & 27.8\% \\
QMC$\to$Shrunk & 25.4\% \\
Shrunk$\to$QMC & 13.4\% \\
% Grid$\to$Grid & 55.8\% \\
% QMC$\to$QMC & 56.6\%\\
% Shrunk$\to$Shrunk & \textbf{57.0\%} \\
% \hline
% QMC$\to$Grid & 53.7\% \\
% Grid$\to$QMC & 54.4\% \\
% QMC$\to$Shrunk & 51.3\% \\
% Shrunk$\to$QMC & 33.9\% \\
\hline
\end{tabular}
\end{table}

% Having found that discretization invariance enables generalization to a different resolution at test time,
We next examine whether \modelname can adapt to an entirely different type of discretization at test time. 
We use grid, QMC and shrunk discretizations of 1024 ($32 \times 32$) points. The Shrunk discretization shrinks a Sobol (QMC) sequence towards the center of the image (each point $x \in [-1,1]^2$ is mapped to $x^2 \text{sgn}(x)$). In image classification, the object of interest is usually centered, hence the shrunk$\to$shrunk setting performs on par with other discretizations despite its higher discrepancy.

Interestingly, changing discretization type at inference time has varying impact. Usually it only slightly degrades \modelname's accuracy (\tabref{tbl:high_discrepancy}), but performance falls dramatically when shifting from high to low discrepancy discretizations (shrunk$\to$QMC). 
This observation points to the importance of training on the right discretizations to attain a network that generalizes well to other discretizations for the given task.
% We find that even when we replace QMC with sample means over high discrepancy sequences, \modelname's performance on classification is not compromised. 
% What seems to matter is that the model uses point sequences of similar discrepancy at training and test time.
% There is an important caveat to the idea of discretization invariance -- a \modelname trained solely with one type of discretization is not gu change behavior when 

\subsection{Neural Field Segmentation}\label{segmentation}

\paragraph{Data}
We perform semantic segmentation of SIRENs fit to street view images from Cityscapes \citep{Cordts2016Cityscapes}, grouping segmentation labels into 7 categories. We train on 2975 NFs with coarsely annotated segmentations only, and test on 500 NFs with both coarse and fine annotations (\figref{fig:segs}). We use a $48 \times 96$ grid discretization since segmentation labels are only given at pixel coordinates.
SIREN is trained on Cityscapes images for 2500 steps, using the same architecture and settings as ImageNet. Seven segmentation classes are used for training and evaluation, labeled as `flat' (\eg road), `construction' (\eg building), `object' (\eg pole), `nature', `sky', `human', and `vehicle'.

\paragraph{Architecture and Baselines}
We compare the performance of 3 and 5 layer \modelnames and fully convolutional networks (FCNs), as well as a non-uniform CNN \citep{nuft_cnn}. We also train a hypernetwork that learns to map each SIREN to the parameters of a new SIREN representing its segmentation.

\modelname-3 uses two MLP convolutional layers at the same resolution followed by channel mixing (pointwise convolution). There are 16, 32 and 32 channels in the intermediate features. The support of the kernels in the MLP convolutional layers is $.025 \times .05$ and $.075 \times .15$ respectively, to account for the wide images in Cityscapes being remapped to $[-1,1]^2$.
\modelname-5 uses a strided MLP convolution to perform downsampling and nearest neighbor interpolation for upsampling. There are 16 channels in all intermediate features. There is a residual connection between the higher resolution layers. % \blue{All networks add positional encodings of 4 frequencies.} 

The CNN baselines use 3x3 convolutions with the same number of layers and channels as \modelname. All networks use ReLU activation and batch normalization.
The hypernetwork learns a map from the SIREN RGB to a SIREN with the same architecture that represents the segmentation. It predicts changes to the weights of all layers before the final fully connected layer, and predicts raw values for the weights of the final layer since it has 7 output channels for segmentation instead of 3 for RGB.
The non-uniform CNN applies the non-uniform Fourier transform followed by inverse Fast Fourier Transform, and feeds the result to the 3-layer FCN to perform segmentation.

\paragraph{Training and Evaluation}
Networks are trained for 10K iterations with a learning rate of $10^{-3}$. We evaluate each model with mean intersection over union (mIoU) and pixel-wise accuracy (PixAcc).

\paragraph{Performance}
The hypernetwork and non-uniform CNN perform poorly compared to both FCNs and \modelnames (\tabref{tbl:seg}). \modelname-3 outperforms the equivalent FCN, and less often confuses features such as shadows and road markings (\figref{fig:segs}). However, the performance deteriorates when downsampling and upsampling layers are added (\modelname-5), echoing the difficulty in scaling \modelnames observed in classification. We suggest potential methods for remedying this in Section \ref{app:future_work}.

% \begin{table}[tb]
% \vspace{-8pt}
% \begin{minipage}{.54\linewidth}
% \centering
% \caption{NF segmentation performance on NFs fit to Cityscapes images.}\label{tbl:seg}
% \vspace{-2pt}
% \begin{tabular}{lcccc}
% \hline
% \multirow{2}{*}{\textbf{Model Type}} & \multicolumn{2}{c}{\emph{Coarse Segs}} & \multicolumn{2}{c}{\emph{Fine Segs}} \\
% & \textbf{mIoU} & \textbf{PixAcc} & \textbf{mIoU} & \textbf{PixAcc}\\
% \hline
% 3-layer FCN & 0.409 & 69.6\% & 0.374 & 63.6\% \\
% \modelname-3 & \textbf{0.471} & \textbf{78.5\%} & \textbf{0.417} & \textbf{69.4\%} \\
% \hline
% 5-layer FCN & \textbf{0.488} & \textbf{79.4\%} & \textbf{0.436} & \textbf{72.5\%} \\
% \modelname-5 & 0.443 & 77.7\% & 0.394 & 68.4\% \\
% \hline
% \end{tabular}
% % \vspace{-2pt}
% \end{minipage}
% \hfill 
% % \end{table}
% % \begin{table}[b]
% % \vspace{-8pt}
% \begin{minipage}{.4\linewidth}
% \vspace{-18pt}
% \centering
% \caption{NF segmentation performance on other baselines (coarse segs only).}\label{tbl:seg2}
% \vspace{-2pt}
% \begin{tabular}{lcccc}
% \hline
% \textbf{Model Type} & \textbf{mIoU} & \textbf{PixAcc}\\
% \hline
% Hypernetwork & ? & ?\% \\
% Modulations & ? & ?\% \\
% Non-uniform conv & ? & ?\% \\
% \hline
% \end{tabular}
% % \vspace{-2pt}
% \end{minipage}
% \vspace{-5pt}
% \end{table}

\begin{table}[htbp]
\centering
\caption{Segmentation performance on NFs fit to Cityscapes images (trained on coarse segs).}\label{tbl:seg}
\vspace{-3pt}
\begin{tabular}{lcccc}
\hline
\multirow{2}{*}{\textbf{Model Type}} & \multicolumn{2}{c}{\emph{Coarse Segs}} & \multicolumn{2}{c}{\emph{Fine Segs}} \\
& \textbf{mIoU} & \textbf{PixAcc} & \textbf{mIoU} & \textbf{PixAcc}\\
\hline
3-layer FCN & 0.409 & 69.6\% & 0.374 & 63.6\% \\
\modelname-3 & \textbf{0.471} & \textbf{78.5\%} & \textbf{0.417} & \textbf{69.4\%} \\
\hline
5-layer FCN & \textbf{0.488} & \textbf{79.4\%} & \textbf{0.436} & \textbf{72.5\%} \\
\modelname-5 & 0.443 & 77.7\% & 0.394 & 68.4\% \\
\hline
Hypernetwork & 0.038 & 7.9\% & 0.042 & 8.3\% \\
Non-uniform & 0.109 & 26.5\% & 0.106 & 22.7\% \\
\hline
\end{tabular}
\end{table}

\begin{figure}[tb]
\centering
\includegraphics[width=\linewidth]{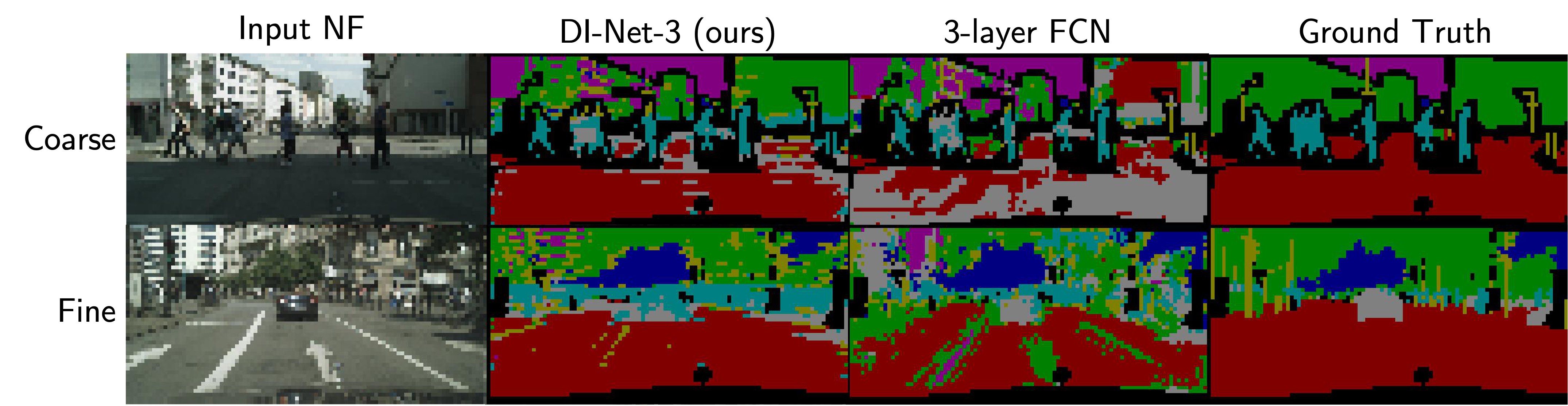}
\caption{Cityscapes NF segmentations for models trained on coarse segmentations only. NF-Net produces NF segmentations, which can be evaluated at the subpixel level.}
\label{fig:segs}
\end{figure}

\subsection{Signed Distance Function Prediction}\label{sec:sdf}

\paragraph{Data}
We train a convolutional \modelname to map a field of RGBA values in 3D to its signed distance function (SDF). We create a synthetic dataset of 3D scenes with randomly colored balls embedded in 3D space.
Each toy scene contains 2-4 balls of random radii (range 0.2-0.5), centers, and colors scattered in 3D space ($\domain=[-1,1]^3$). For simplicity, we train each network directly on the closed form expressions for the RGBA fields and signed distance functions, rather than fitting neural fields first. We train on RGBA-SDF pairs using a mean squared error (MSE) loss on the predicted SDF. We use grid ($16 \times 16 \times 16$), QMC, shrunk, Monte Carlo (i.i.d. points drawn uniformly from the domain), and mixed discretizations of 4096 points. In the mixed setting, each minibatch uses one of the other four discretizations at random.

\paragraph{Architecture and Baselines}
The FCN contains 3 convolutional layers of kernel lengths 3, 5 and 1 respectively. Accordingly, the convolutional \modelname contained 2 convolutional layers followed by a linear combination layer. There are 8 channels in all intermediate features.

\paragraph{Training and Evaluation}
We train each network for 1000 iterations with an AdamW optimizer with a batch size of 64 and a learning rate of 0.1 with an MSE loss on the SDF.%, alongside a ``floor'' represented as an opaque block of random color
% We use 3-layer networks

\paragraph{Performance}
Under any fixed discretization, the convolutional \modelname significantly outperforms the equivalent discrete network (MSE of 0.022 vs. 0.067 respectively). In Figure \ref{fig:f_sdf}, we illustrate our model's ability to also produce outputs that are discretized differently than the input, making \modelname's output in effect a neural field. By changing the output discretization of the last convolutional \modelname layer, we can evaluate the output SDF anywhere on the domain without changing the input discretization. Whereas the discrete network is forced to output predictions at the resolution it was trained on, convolutional \modelname can produce outputs along a high-quality grid discretization given a coarse QMC discretization, even when it is only trained under QMC output discretizations.
\begin{table}[htbp]
% \begin{table*}[htbp]
\vspace{-2pt}
\centering
\caption{Mean squared error ($\times 10^{-2}$) of predicted SDFs under different discretizations. The three best settings are bolded. MC=Monte Carlo.}
\label{tbl:sdf}
\begin{tabular}{l|cccc}
\hline
\diagbox[width=2.2cm, height=18pt]{\textbf{Train}}{\textbf{Test}} & Grid & QMC & Shrunk & MC\\
\hline
% FCN & 0.068 & N/A & N/A & N/A \\
Grid & \textbf{2.18} & 2.54 & 3.77 & 3.81 \\
QMC & 3.60 & \textbf{2.01} & 2.94 & 3.92 \\
Shrunk & 3.72 & 2.88 & \textbf{2.00} & 4.30 \\
MC & 6.45 & 5.97 & 4.89 & 5.92 \\
Mixed & 4.65 & 4.41 & 3.26 & 4.09 \\
\hline
\end{tabular}
\vspace{-2pt}
% \end{table*}
\end{table}

\modelname performs almost equally well under grid, QMC and shrunk discretizations in the in-distribution setting, but on this task it is more sensitive to out-of-distribution discretizations than the classifier in Section \ref{sec:classifier}. While shrunk$\to$QMC is the worst performing combination for the classifier, here it is one of the better performing combinations. \modelname likely struggles with Monte Carlo sampling due to its high discrepancy discretizations, resulting in cases where smaller balls are entirely missed. Interestingly, the model fares worse when trained on multiple discretizations simultaneously, suggesting that the network may be guided in opposing directions by different discretizations resulting in unstable training. These observations illustrate the complex task-dependent interplay between the type of discretizations observed at training time and the ability of the model to generalize to new discretizations.
\begin{figure}[tb]
% \begin{wrapfigure}{R}{.5\textwidth}
% \vspace{-17pt}
\begin{center}
\includegraphics[width=.55\textwidth]{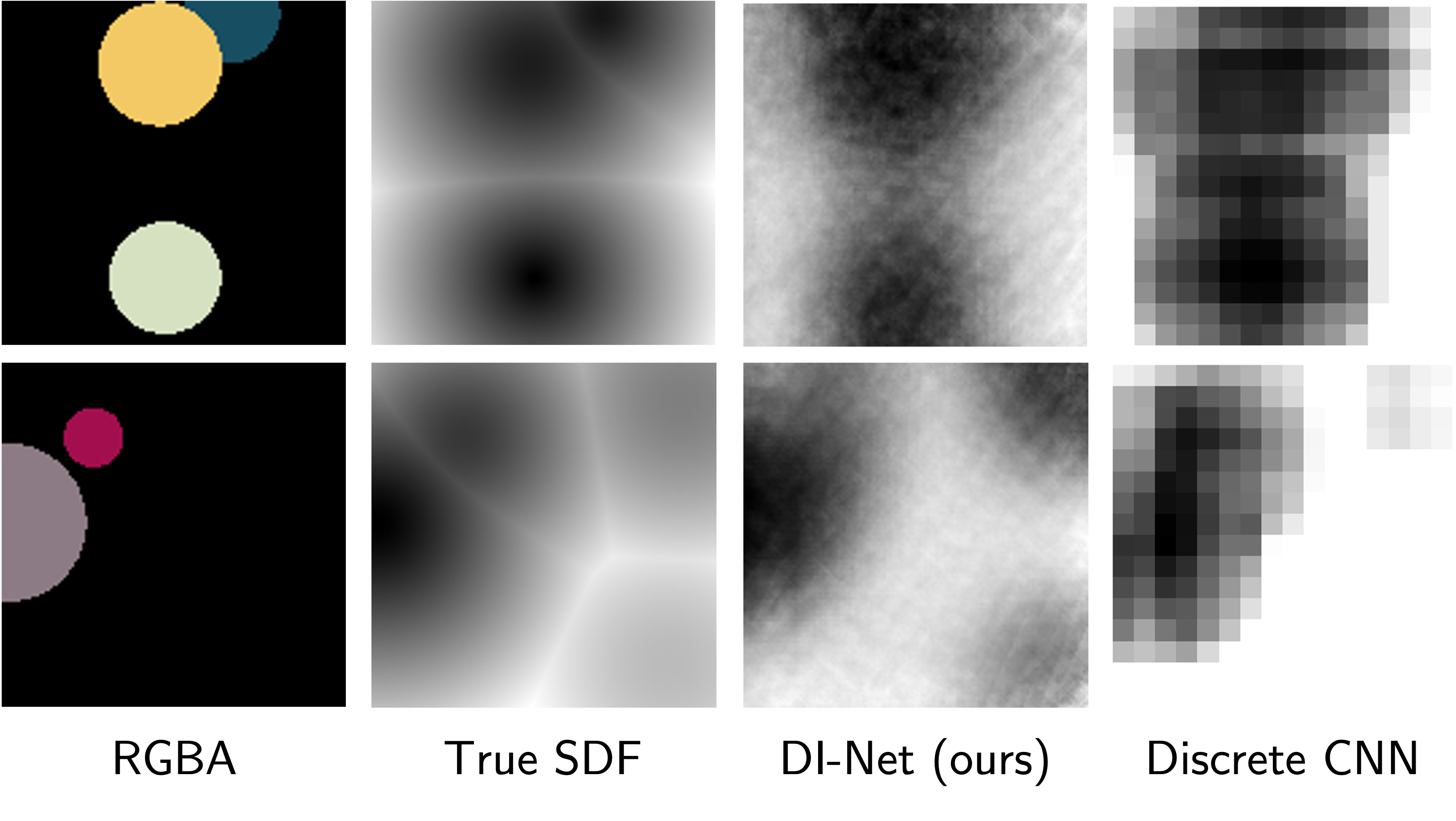}
\end{center}
% \vspace{-10pt}
\caption{2D slices of two toy 3D scenes (black is transparent) with signed distance functions predicted by \modelname and a fully convolutional network.}
\label{fig:f_sdf}
\end{figure}
% \vspace{-12pt}
% \end{wrapfigure}

\subsection{Initialization with discrete networks}\label{app:stability}
A convolutional \modelname can be initialized with the weights of a pre-trained convolutional neural network. In fact, every CNN induces a convolutional \modelname that behaves identically when every layer is restricted to a particular grid discretization. However, the behavior of the pre-trained CNN is not preserved when the \modelname switches to other discretizations. Tiny perturbations from the regular grid can change a classifier's predictions, as small differences in each layer can accumulate to exert a large influence on the final output. Figure \ref{fig:interpolation_deviation} illustrates this phenomenon for a \modelname initialized with a truncated EfficientNet.

Moreover, we find that once grid discretization is abandoned, large \modelnames cannot easily be fine-tuned to restore the behavior of the discrete network used to initialize it.
In Tables \ref{tbl:fine_tune_cls} and \ref{tbl:fine_tune_seg}, we illustrate that \modelname initialized with a large pre-trained discrete network does not match the performance of the original model when fine-tuned with QMC sampling. We use a truncated version of EfficientNet \citep{efficientnet} for classification, and fine-tune for 200 samples per class. For segmentation we use a truncated version of ConvNexT-UPerNet \citep{convnext}, fine-tuning with 1000 samples.

\begin{table}[ht]
\centering
\caption{Pre-trained models fine-tuned on ImageNet NF classification.}\label{tbl:fine_tune_cls}
\begin{tabular}{lcc}
\hline
\textbf{Model Type} & \textbf{Accuracy}\\
\hline
EfficientNet \citep{efficientnet} & \textbf{66.4\%} \\
\modelname-EN & 48.1\% \\
\hline
\end{tabular}
\end{table}
% \end{minipage}
% \hfill
% \begin{minipage}{.5\linewidth}
% \footnotesize
\begin{table}[ht]
\centering
\caption{Pre-trained models fine-tuned on Cityscapes segmentation.}\label{tbl:fine_tune_seg}
\begin{tabular}{lcc}
\hline
\textbf{Model Type} & \textbf{Mean IoU} & \textbf{Pixel Accuracy}\\
\hline
ConvNexT \citep{convnext} & \textbf{0.429} & 68.1\% \\
\modelname-CN & 0.376 & \textbf{68.7\%} \\
\hline
\end{tabular}
\end{table}

These observations suggest that the optimization landscape of discretized maps can vary significantly with small changes in $X$, and that the behavior of pre-trained CNNs should not be expected to generalize well to other discretizations. 
We suspect that this instability may be linked to the innate sensitivity of the grid discretization due (at least in part) to its higher discrepancy. We observe that the output of a randomly initialized \modelname is less stable under changing sampling resolution with a grid pattern relative to a low discrepancy discretization (\figref{fig:e_resolution}). While the output of a \modelname with QMC sampling converges at high resolution, the grid sampling scheme has unstable outputs until very high resolution. Only the grids that overlap each other (resolutions in powers of two) produce similar activations.%The gradients of discrete convolutional layers with respect to kernel parameters have much smoother optimization landscapes over large networks relative to continuous convolutional layers parameterized with MLPs or coefficients of a polynomial basis. 

\begin{figure}[h]
\begin{minipage}{.48\textwidth}
\begin{center}
\includegraphics[width=0.79\textwidth]{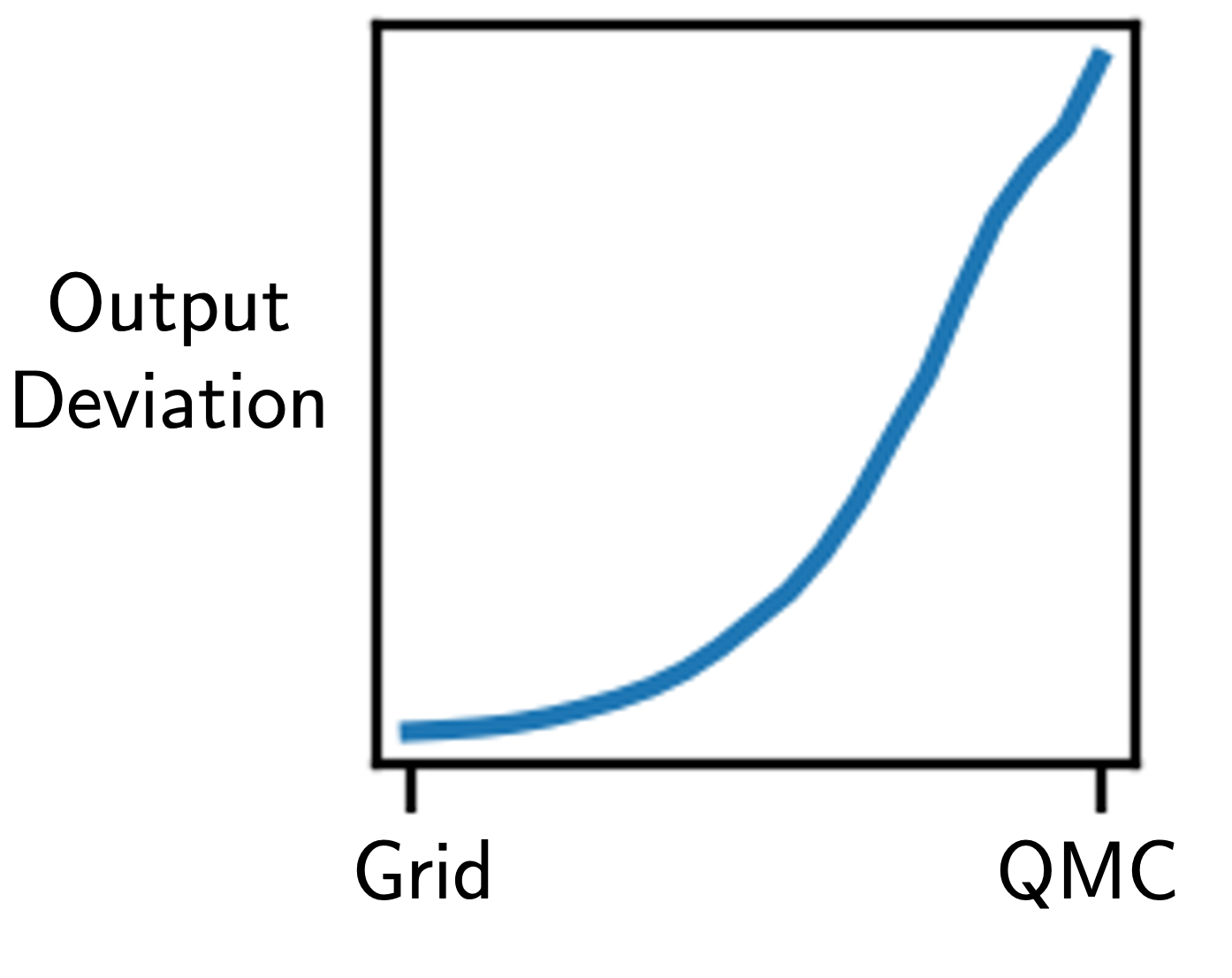}
\end{center}
\caption{An \modelname's output diverges as sample points are gradually shifted from a grid layout to a low discrepancy sequence.}
\label{fig:interpolation_deviation}
\end{minipage}
\hfill
\begin{minipage}{.48\textwidth}
\begin{center}
\includegraphics[width=0.85\textwidth]{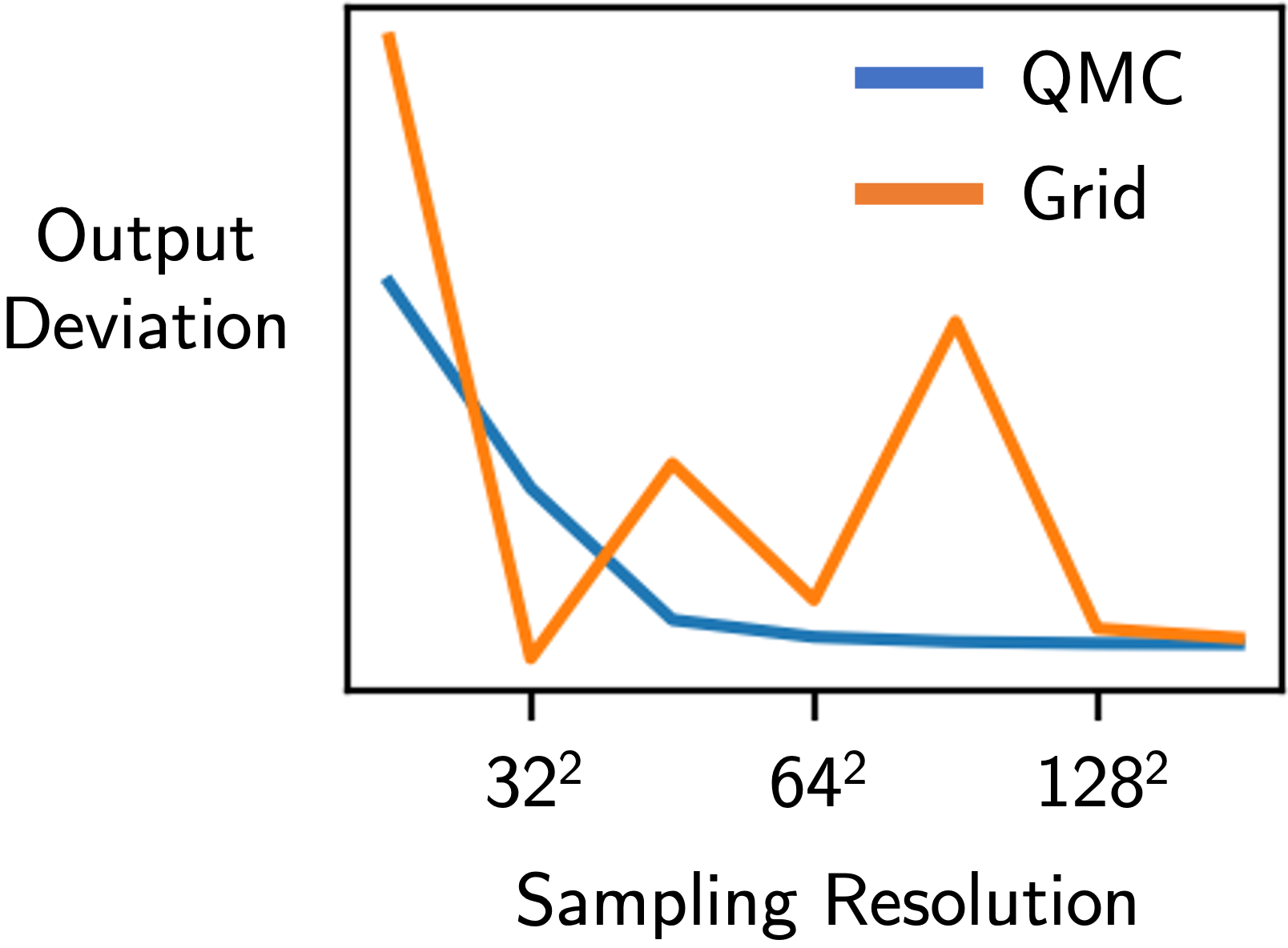}
\end{center}
\caption{Distance of the output of a \modelname from its grid output at $32\times32$ resolution, when sampling at various resolutions. Its outputs deviate rapidly as its discretization shifts from a regular grid to a low discrepancy sequence}
\label{fig:e_resolution}
\end{minipage}
\end{figure}

\section{Future Directions}\label{app:future_work}

\paragraph{Discretization invariance as learning signal}
Our experiments show that a gap remains in a trained \modelname's ability to generalize to unseen discretizations at test time. One approach to bridging this generalization gap may be to explicitly encourage discretization invariance during training by adding a regularization term that minimizes the distance between intermediate features obtained with different discretizations. Such a scheme could also be used for contrastive pre-training, where different discretizations of the same neural field should be projected onto closer points in latent space than discretizations of different NFs.

\paragraph{Extending \modelname to adaptive, high discrepancy discretizations}\label{high_discrepancy}
Our approach of choosing the discretization of the domain \textit{a priori} breaks down in applications where large regions of the domain are less informative for the task of interest. For example, most of the information in 3D scenes is concentrated at object surfaces, so densely sampling all five dimensions of a neural radiance field is a poor choice of discretization for most downstream tasks. Moreover, ground truth labels for dense prediction tasks may only be available along a high discrepancy discretization. However, low discrepancy sampling can still guide an initial discretization (\eg selecting an initial set of camera poses or rays), and adaptive sampling techniques such as adaptive quadrature or sphere tracing can be used to refine the discretization. Future work should explore the design of \modelnames that can achieve low approximation errors under task-specific discretizations.
\blue{It may also be possible to reduce the number of samples needed to achieve low discretization error over the course of training or at inference time, for example by learning which discretizations produce more reliable estimates, or by designing layers that encourage the integral to converge to a predefined set of quantized values and propagating these quantized values downstream.}
% Such a discretization can be handled by quadrature, but more work is required to design efficient quadrature methods within a neural network. Additional techniques such as learned coordinate transformations or learned discretizations may also be helpful for extending our model to extreme discretizations or highly non-uniform measures.

\paragraph{Error propagation}
Neural fields do not always faithfully represent the underlying data, \blue{whether due to insufficient coverage by sensors or due to a suboptimally trained field.} Our analysis can be extended to account for this influence on the error in a model's output.
In the worst case, these deviations are adversarial examples, and robustness techniques for discrete networks can also be applied to \modelname.
But what can we say about typical deviations of NFs? Future work should analyze patterns in the mistakes that different types of NFs make, and investigate how to make downstream models robust to these.

\section{Conclusion}
We present a general framework for understanding and constructing discretization invariant neural networks, highlighting the importance of discrepancy in bounding the deviation of the network outputs under different discretizations, as well as establishing convergence in both its learning and inference behavior under equidistributed discretization sequences. \modelnames can learn arbitrary maps between integrable functions of bounded variation, making them a useful tool for performing inference on neural fields in a parameterization-agnostic manner, or for applications to systems that process a continuous signal by querying it on a point set. \blue{Discretization invariance may be a particularly useful concept to harness in the context of 3D scene understanding, as the information in a scene that is relevant for most tasks of interest is invariant under a much wider range of discretizations (\eg, points, rays and light fields; 360 degree or forward-facing) than can be described purely by group symmetries.} %Motivated by discretization invariance, we designed a parameterization based on numerical integration that gives rise to strong convergence properties and a universal approximation theorem for a wide class of functions.
%Not only is our framework agnostic to NF parameterization, but it also extends to functions of bounded variation over a wide class of domains, and thus can be applied to many systems that process a continuous signal by querying it on a subset of its domain.
% We outline several directions for enhancing such models in Appendix \ref{app:future_work}, which could enable them to scale to deeper architectures and tackle more challenging tasks.
With the increasing popularity and diversity of neural fields as well as the emergence of tools to efficiently create large datasets of NFs, understanding discretization invariant learning may be key to developing interoperable approaches for learning on such data. % \modelnames may become an attractive option when interoperability and discretization invariance are desired. %the time is ripe for a new universal framework for learning on NFs.
\subsubsection*{Acknowledgments}
This work was was supported in part by NIH NIBIB NAC P41EB015902, Wistron Corporation, and a Takeda Graduate Fellowship to Clinton Wang. We thank Neel Dey and Daniel Moyer for their many helpful suggestions.

\clearpage
\bibliography{biblio}
\bibliographystyle{assets/tmlr}

\clearpage
\onecolumn
\renewcommand\thefigure{\thesection.\arabic{figure}}
\renewcommand\thetable{\thesection.\arabic{table}}
\newpage
\appendix
\setcounter{figure}{0}
\setcounter{table}{0}
\noindent \textbf{\Large{Appendix}}

\noindent Appendix \ref{app:di_background} provides additional background on the variation of a function and discrepancy of a point set, as well as more general forms of DI layers. Appendix \ref{app:proofs} provides detailed proofs of the Universal Approximation and Convergent Empirical Gradients theorems, as well as some corollaries. Appendix \ref{app:specs} provides additional examples of DI layers that extend convolutional neural networks and vision transformers.
\section{More Details on Discretization Invariance}\label{app:di_background}
\subsection{Koksma–Hlawka inequality and low discrepancy sequences}\label{app:qmc_background}

Recall that a function $f \in L^2(\domain)$ satisfies a Koksma–Hlawka inequality if for any point set $X \subset \domain$,
\begin{equation}
\left|\mcsum f(x') - \intdomain f(x)\,dx \right|\leq V(f)\,D(X), 
\end{equation}
for normalized measure $dx$, some notion of variation $V$ of the function and some notion of discrepancy $D$ of the point set.
The classical inequality gives a tight error bound for functions of bounded variation in the sense of Hardy-Krause (BVHK), a generalization of bounded variation to multivariate functions on $[0,1]^d$ which has bounded variation in each variable. Specifically, the variation is defined as:
\begin{equation}
V_{HK}(f) = \sum_{\alpha \in \{0,1\}^d} \int_{[0,1]^{|\alpha|}} \left| \frac{\partial^\alpha}{\partial x^\alpha} f(x_\alpha) \right| dx ,
\end{equation}
with $\{0,1\}^d$ the multi-indices and $x_\alpha \in [0,1]^d$ such that $x_{\alpha,j}=x_j$ if $j \in \alpha$ and $x_{\alpha,j} = 1$ otherwise. The classical inequality also uses the star discrepancy of the point set $X$, given by:
\begin{equation}
D^*(X)=\sup _{I\in J}\left|\mcsum \indicator_I(x_j) - \lambda(I)\right| ,
\end{equation}
where $J$ is the set of $d$-dimensional intervals that include the origin, and $\lambda$ the Lebesgue measure.

A point set is called low discrepancy if its discrepancy is on the order of $O((\ln N)^{d}/{N})$. Quasi-Monte Carlo calculates the sample mean using a low discrepancy sequence (see \figref{fig:discrepancy_sequences} for examples in 2D), as opposed to the i.i.d. point set generated by standard Monte Carlo, which will generally be high discrepancy. Because the Koksma–Hlawka inequality is sharp, when estimating the integral of a BVHK function on $[0,1]^d$, the error of the QMC approximation decays as $O((\ln N)^{d}/{N})$, in contrast to the error of the standard Monte Carlo approximation that decays as $O(N^{-1/2})$  \citep{caflisch1998monte}.
% As before, we denote by $\domain$ a bounded measurable subset of a separable metrizable space of dimension $d \in \natural$.

\begin{figure}[htb]
\begin{center}
\includegraphics[width=0.8\textwidth]{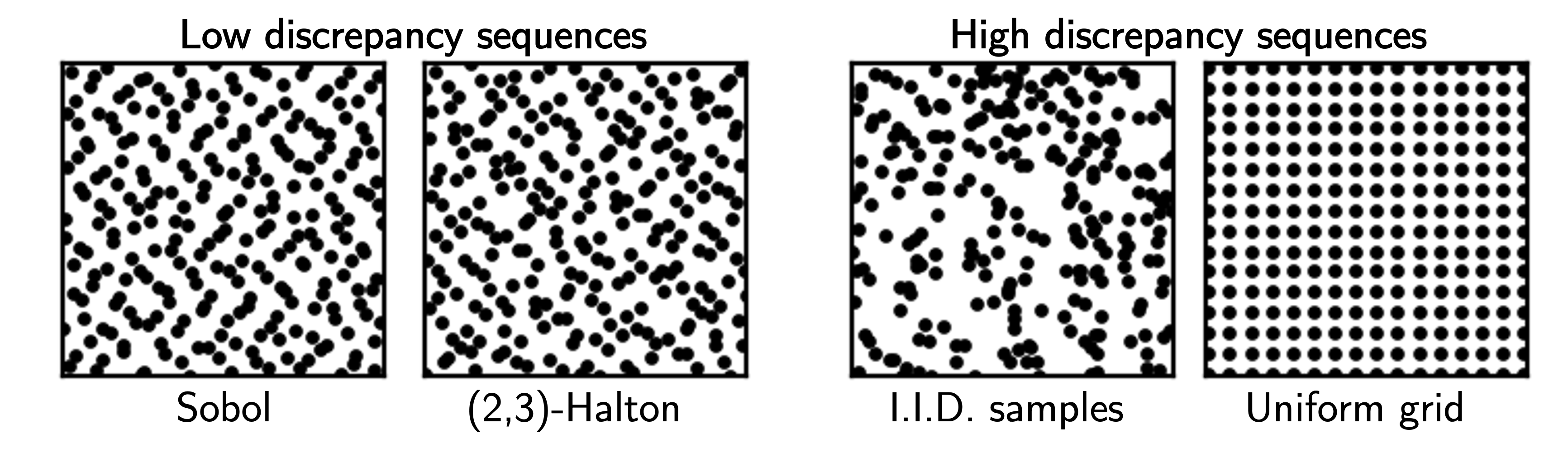}
\end{center}
% \vspace{-10pt}
\caption{Examples of low and high discrepancy sequences in 2D.}
\label{fig:discrepancy_sequences}
\end{figure}

However, BVHK is a rather restrictive class of functions defined on $[0,1]^d$ that excludes all functions with discontinuities. \citet{BRANDOLINI2013158} extended the Koksma–Hlawka inequality to two classes of functions defined below:

\paragraph{Piecewise smooth functions}
Let $f$ be a smooth function on $[0,1]^d$ and $\domain$ a Borel subset of $[0,1]^d$. Then $f|_{\domain}$ is a piecewise smooth function with the Koksma–Hlawka inequality given by variation
\begin{equation}
V(f) = \sum_{\alpha \in \{0,1\}^d} 2^{d-|\alpha|} \int_{[0,1]^d} \left|  \frac{\partial^\alpha}{\partial x^\alpha} f(x) \right| \,dx ,
\end{equation}
and discrepancy:
\begin{equation}
D(X) = 2^d \sup_{I \subseteq [0,1]^d} \left| \mcsum \indicator_{\domain \cap I}(x_j) - \lambda(\domain \cap I) \right| .
\end{equation}

% in the sense that all its mixed partial derivatives are integrable:
% \begin{equation}
% \int_{[0,1]^d} \left| \left(\pder{x} \right)^\alpha f(x) \right| \,dx < \infty
% \end{equation}
% for all 

\paragraph{$\bm{W^{d,1}}$ functions on manifolds}
Let $\manifold$ be a smooth compact $d$-dimensional manifold with normalized measure $dx$. Given local charts $\{\phi_k\}_{k=1}^K, \, \phi_k:[0,1]^d \to \manifold$, the variation of a function $f \in W^{d,1}(\manifold)$ is characterized as:
\begin{equation}
V(f) = c \sum_{k=1}^K \sum_{|\alpha| \le n} \int_{[0,1]^d} \left| \frac{\partial^\alpha}{\partial x^\alpha} (\psi_k (\phi_k (x)) f (\phi_k(x)) \right| \,dx ,
\end{equation}

with $\{\psi_k\}_{k=1}^K$ a smooth partition of unity subordinate to the charts, and constant $c > 0$ that depends on the charts but not on $f$.
Defining the set of intervals in $\manifold$ as $J= \{U : U = \phi_k(I)$ for some $k$ and $I \subseteq [0,1]^d\}$, with measure $\mu(U) = \lambda(I)$, the discrepancy of a point set $Y=\{y_j\}_{y=1}^N$ on $\manifold$ is:
\begin{equation}
D(Y) = \sup_{U \in J} \left| \mcsum \indicator_U(y_j) - \mu(U) \right| .
\end{equation}

\paragraph{Neural fields}
We define the variation of a neural field as the sum of the variations of each channel.

\textbf{Note}: The notion of discrepancy is not limited to the Lebesgue measure. The existence of low discrepancy point sets has been proven for non-negative, normalized Borel measures on $[0,1]^d$ due to \citet{discrepancy_non_uniform_measures}.
An extension of our framework to non-uniform measures is a promising direction for future work (see Appendix \ref{high_discrepancy}).

\subsection{More General forms of DI layers} \label{app:general_forms}
Recall that we defined DI layers as having the form $\layer_{\phi}[f] = \intdomain H_{\phi}[f](x) dx$ for neural fields $f$ (we drop $\theta$ here for readability).
In the case where $H_{\phi}[f]: \domain \to \real^n$, i.e. $\layer_{\phi}$ is a layer that maps neural fields to vectors, we permit layers of the following more general form:
\begin{equation}
\layer_{\phi}[f] = \intdomain h_\phi(x,f(x),\dots,D^{\alpha} f(x)) d\mu(x) , \label{eq:vcalc3}
\end{equation}
for weak derivatives up to order $|\alpha|$ taken with respect to each channel. $D^{\alpha} f ={\frac {\partial ^{|\alpha |} f}{\partial x_{1}^{\alpha _{1}}\dots \partial x_{n}^{\alpha _{n}}}}$ for multi-index $\alpha$. The dependence of $h$ on weak derivatives up to order $k=|\alpha|$ requires that the weak derivatives are integrable, \ie, $f$ is in the Sobolev space $W^{k,2}(\domain)$, and that $h_\phi$ is Gateaux differentiable \wrt these weak derivatives. Note that a non-uniform measure $\mu$ changes the discrepancy of sampled sequences.

In the function-valued case, we can similarly have:
\begin{align}
\layer_\phi[f](x') &= \int_{\domain} H_{\phi}[f](x,x') d\mu(x) \label{eq:vcalc2}\\
&= \int_{\domain} h_\phi(x,f(x),\dots, D^{\alpha} f(x),x',f(x'),\dots,D^{\alpha} f(x')) d\mu(x) \label{eq:vcalc2_expanded},
\end{align}

where we require $H_\phi[f](\cdot, x') \in \inrspace[n]$. Both of our key theoretical results (convergence of discretized gradients and universal approximation) apply to this general form. Gateaux differentiability of $h_\phi$ allows us to apply the same proof in Appendix \ref{proof:autodiff} to the derivatives. Since allowing layers to depend on weak derivatives results in an even more expressive class of DI-Nets, the universal approximation theorem still holds.
% We require the following conditions on all functions $f$:
% \begin{itemize}[leftmargin=1.4em]
%     \item $V(h_\phi[f]) < \infty$ enables the Koksma–Hlawka inequality to be applied}
%     \item \blue{$h_\phi[f]$ (sub-)differentiable \wrt $\phi$ the layer's parameters are learnable}
%     \item \blue{$h_\phi$ Fr\'echet differentiable \wrt $f$ enables discretization invariant backpropagation}
% \end{itemize}

% \blue{The Fr\'echet derivative describes changes in a functional under infinitesimal perturbations of its input function.}

% \layer_{\phi}[f] = \int_{\domain} h_{\phi}[f](x) dx , \label{eq:vcalc1}
% with discrete map given by the sample mean $[f] = \frac{1}{|X|} \sum_{x \in X} h_{\phi}[f](x)$.
% \begin{equation}
% \hat{\layer}^X_{\phi}[f] = \frac{1}{|X|} \sum_{x \in X} h_{\phi}[f](x). \label{eq:discrete_map2}
% \end{equation}
% w(x;\mu,X)

% The Fr\'echet derivative is defined as a bounded linear operator $dh_\phi$ such that:
% \begin{equation}
% \lim _{\|\delta f\|\to 0}{\frac {\|h_\phi[f+\delta f] - h_\phi[f] - dh_\phi[f]\|_{L^2(\domain)}}{\|\delta f\|_{\inrspace[c]}}}=0.
% \end{equation}

\section{Proofs}\label{app:proofs}

\setcounter{figure}{0}

\subsection{Proof of Lemma \ref{lemma:bump_funct} (convergence of discretized gradients w.r.t. inputs)}
\label{proof:bump_lemma}

\begin{proof}

Here we combine the function to vector and function to function cases for brevity.
For fixed $\tilde{x} \in \domain$, the discretized derivative of $\layer_\phi$ \wrt $f(\tilde{x})$ can be written:
\begin{align}
\pder{f(\tilde{x})} \hat{\layer}_{\phi}^N[f] &= \pder{f(\tilde{x})} \left( \mcsumN H_{\phi}[f](x) \right) \\
&= \limTzero \frac{1}{\tau |X_N|} \sum_{x \in X_N} H_{\phi}[f+ \tau \psi^N_{\tilde{x}}](x) - H_{\phi}[f](x) , \label{eqn:define_bump_fxn}
\end{align}

where $\psi^N_{\tilde{x}}$ is any function in $W^{|\alpha|,1} (\domain)$ that is 1 at $\tilde{x}$ and 0 on $X_N \backslash \{\tilde{x}\}$, and whose derivatives are 0 on $X_N$.
As an example, take the bump function which vanishes outside a small neighborhood of $\tilde{x}$ and smoothly ramps to 1 on a smaller neighborhood of $\tilde{x}$, making its weak derivative 0 at $\tilde{x}$.

By (\ref{eqn:kh_inequality}) we know that the sequences $\norm{\hat{\layer}_{\phi}^N[f] - \layer_{\phi}[f]}$ and $\norm{\hat{\layer}_{\phi}^N[f+\tau \psi^N_{\tilde{x}}] - \layer_{\phi}[f+\tau \psi^N_{\tilde{x}}]}$ converge uniformly in $N$ to 0 for any $\tau>0$, where we can use the $\ell_1$ norm for vector outputs or the $L^1$ norm for function outputs.
So for any $\eps>0$ and any $\tau>0$, we can choose $N_0$ large enough such that for any $N>N_0$:
\begin{gather}
\norm{\hat{\layer}_{\phi}^N[f+\tau \psi^N_{\tilde{x}}] - \layer_{\phi}[f+\tau \psi^N_{\tilde{x}}]} < \frac{\eps}{2} , \label{eqn:inr_input_grad_inputs_start}
\end{gather}
and
\begin{gather}
\norm{ \hat{\layer}_{\phi}^N[f] - \layer_{\phi}[f]} < \frac{\eps}{2} .
\end{gather}

Then,
\begin{gather}
\norm{ \hat{\layer}_{\phi}^N[f+\tau \psi^N_{\tilde{x}}] - \layer_{\phi}[f+\tau \psi^N_{\tilde{x}}] } + \norm{ \hat{\layer}_{\phi}^N[f] - \layer_{\phi}[f] } < \eps ,\\
\shortintertext{by the triangle inequality,}
\norm{ (\hat{\layer}_{\phi}^N[f+\tau \psi^N_{\tilde{x}}] - \layer_{\phi}[f+\tau \psi^N_{\tilde{x}}]) - (\hat{\layer}_{\phi}^N[f] - \layer_{\phi}[f]) } < \eps \\
\norm{ \left( \hat{\layer}_{\phi}^N[f+\tau \psi^N_{\tilde{x}}] - \hat{\layer}_{\phi}^N[f] \right) - \left( \layer_{\phi}[f+\tau \psi^N_{\tilde{x}}] - \layer_{\phi}[f] \right) } < \eps ,
\end{gather}
hence $\norm{ \left( \hat{\layer}_{\phi}^N[f+\tau \psi^N_{\tilde{x}}] - \hat{\layer}_{\phi}^N[f] \right) - \left( \layer_{\phi}[f+\tau \psi^N_{\tilde{x}}] - \layer_{\phi}[f] \right) }$ converges uniformly to 0. Since the distance between two vectors is 0 iff they are the same, we can write:
\begin{align}
\limNinf \hat{\layer}_{\phi}^N [f+\tau \psi^N_{\tilde{x}}] - \hat{\layer}_{\phi}^N[f] &= \limNinf \layer_{\phi}[f+\tau \psi^N_{\tilde{x}}] - \layer_{\phi}[f] \\
\limTzero \frac{1}{\tau} \limNinf \left( \hat{\layer}_{\phi}^N [f+\tau \psi^N_{\tilde{x}}] - \hat{\layer}_{\phi}^N[f] \right) &= \limTzero \frac{1}{\tau} \limNinf \left( \layer_{\phi}[f+\tau \psi^N_{\tilde{x}}] - \layer_{\phi}[f] \right) .
\end{align}

By the Moore-Osgood theorem,
\begin{align}
\limNinf \limTzero \frac{1}{\tau} \left( \hat{\layer}_{\phi}^N [f+\tau \psi^N_{\tilde{x}}] - \hat{\layer}_{\phi}^N[f] \right) &= \limNinf \limTzero \frac{1}{\tau} \left( \layer_{\phi}[f+\tau \psi^N_{\tilde{x}}] - \layer_{\phi}[f] \right) \label{eqn:inr_input_grad_inputs_end} \\
\limNinf \pder{f(\tilde{x})} \hat{\layer}_{\phi}^N[f] &= \limNinf d\layer_{\phi}[f; \psi^N_{\tilde{x}}] . \label{eqn:inr2inr_grad_bump}
\end{align}

Since $h_\phi$ is Gateaux differentiable and bounded, $H_\phi$ is also Gateaux differentiable for $f$ of bounded variation, hence the limit on the right hand side is finite.

For each discretization $X_N$, choose a sequence of bump functions around each $x \in X_N$, $\{\psi^N_x\}_{x \in X_N}$. An example of such a family is the (appropriately designed) partitions of unity with $|X_N|$ elements. %Since the sequence of discretizations tends to 0 discrepancy, the bump function sequence will converge in $N$

Then the discretized gradient converges to the limit of the Gateaux derivatives of $\layer_{\phi}$ \wrt the bump function sequence as $N \to \infty$.
\end{proof}

\subsection{Proof of Theorem \ref{prop:autodiff} (convergence of discretized gradients)}\label{proof:autodiff}

\textit{A \modelname permits backpropagation with respect to its input and all its learnable parameters. The discretized gradients converge under any equidistributed discretization sequence.}

We note that this property automatically holds if the layer does not perform numerical integration.
This includes layers which take $\real^n$ as input, as well as point-wise transformations. Then the (sub)derivatives with respect to inputs and parameters need only be well-defined at each point of the output in order to enable backpropagation.

Choose an equidistributed discretization sequence $\{X_N\}_{N \in \natural}$ on $\domain$.
We consider a DI layer $\layer_{\phi}$ which takes a function $f$ as input and may output a vector or function. %By \eqref{eq:vcalc1} and \eqref{eq:vcalc2}, i
\begin{equation}
\layer_{\phi}[f] = \int_{\domain} H_{\phi}[f](x) dx .
\end{equation}

Recall its discrete operator under $X_N$:
\begin{equation}
\hat{\layer}_{\phi}^N[f] = \mcsumN H_{\phi}[f](x) .
\end{equation}

% We call its derivatives the discretized derivatives of $\layer_{\phi}[f]$. We are interested in proving the convergence of the discretized gradients of $\layer_{\phi}[f]$ with respect to its input $f$ as well as the parameters $\phi$ of the layer.

\begin{lemma} \label{proof:chain}
Chained discretized derivatives converge to the chained Gateaux derivatives.
\end{lemma}

\begin{proof}
Consider a two-layer \modelname with function input $f \mapsto (\layer_{\theta} \circ \layer_{\phi})[f]$.
For the case of derivatives \wrt the input, we would like to show the analogue of (\ref{eqn:inr2inr_grad_bump}):
\begin{equation}
\limNinf \pder{f(\tilde{x})} \left( \hat{\layer}_{\theta}^N \circ \hat{\layer}_{\phi}^N \right) [f] = \limNinf d \left( \layer_{\theta} \circ \layer_{\phi} \right) [f; \psi^N_{\tilde{x}} ] , \label{eqn:chained_derivatives_lhs_rhs}
\end{equation}

where the bump function $\psi^N_{\tilde{x}}$ is defined similarly (1 at $\tilde{x}$ and 0 at each $x \neq \tilde{x}$).

\begin{align}
\pder{f(\tilde{x})} \left( \hat{\layer}_{\theta}^N \circ \hat{\layer}_{\phi}^N \right) [f] &= \pder{f(x)} \left( H_{\theta} \left[ \hat{\layer}_{\phi}^N [f] \right] (x) \right) \\
&= \limTzero \frac{1}{\tau} \left( H_{\theta} \left[ \hat{\layer}_{\phi}^N [f + \tau \psi^N_{\tilde{x}}] \right] (x) - H_{\theta} \left[ \hat{\layer}_{\phi}^N [f] \right] (x) \right)
\end{align}
as in (\ref{eqn:define_bump_fxn}).

By (\ref{eqn:kh_inequality}) we know $\norm{\layer_{\theta} \left[ \hat{\layer}_{\phi}^N [f + \tau \psi^N_{\tilde{x}}] \right] - \hat{\layer}_{\theta}^N \left[ \hat{\layer}_{\phi}^N [f + \tau \psi^N_{\tilde{x}}] \right]}$
converges to 0 in $N$ for all $\tau>0$ (where we can use the $\ell_1$ norm for vector outputs or the $L^1$ norm for function outputs), as does
$\norm{\layer_{\theta} \left[ \hat{\layer}_{\phi}^N[f] \right] - \hat{\layer}_{\theta}^N \left[ \hat{\layer}_{\phi}^N[f] \right]}_{L^1}$.
Reasoning as in (\ref{eqn:inr_input_grad_inputs_start})-(\ref{eqn:inr_input_grad_inputs_end}),
we have:
\begin{align}
&\limNinf \limTzero \frac{1}{\tau} \left( \hat{\layer}_{\theta}^N \left[ \hat{\layer}_{\phi}^N [f + \tau \psi^N_{\tilde{x}}] \right] - \hat{\layer}_{\theta}^N \left[ \hat{\layer}_{\phi}^N[f] \right] \right)\\
= &\limTzero \frac{1}{\tau} \limNinf \left( \layer_{\theta} \left[ \hat{\layer}_{\phi}^N [f + \tau \psi^N_{\tilde{x}}] \right] - \layer_{\theta} \left[ \hat{\layer}_{\phi}^N[f] \right] \right)\\
= &\limNinf \limTzero \frac{1}{\tau} \left( \layer_{\theta} \left[ \hat{\layer}_{\phi}^N [f + \tau \psi^N_{\tilde{x}}] \right] - \layer_{\theta} \left[ \hat{\layer}_{\phi}^N[f] \right] \right) \label{eqn:double_layer_double_lims}
\end{align}

Note that
\begin{gather}
d\layer_\phi[f; \psi^N_{\tilde{x}}] = \frac{1}{\tau} \left( \layer_{\phi} [f + \tau \psi^N_{\tilde{x}}] - \layer_{\phi} [f] + o(\tau) \right) \\
\layer_{\phi} [f + \tau \psi^N_{\tilde{x}}] = \layer_{\phi}[f] + \tau d\layer_\phi[f; \psi^N_{\tilde{x}}] + o(\tau) .
\end{gather}

Then we complete the equality in  (\ref{eqn:chained_derivatives_lhs_rhs}) as follows:
\begin{align}
\text{LHS} &= \limNinf \pder{f(\tilde{x})} \left( \hat{\layer}_{\theta}^N \circ \hat{\layer}_{\phi}^N \right) [f]\\
&= \limNinf \limTzero \frac{1}{\tau} \left( \layer_{\theta} \left[ \hat{\layer}_{\phi}^N [f + \tau \psi^N_{\tilde{x}}] \right] - \layer_{\theta} \left[ \hat{\layer}_{\phi}^N[f] \right] \right) \\
&= \limNinf \limTzero \frac{1}{\tau} \left( \layer_{\theta} \left[ \layer_{\phi} [f] + \tau d\layer_\phi[f; \psi^N_{\tilde{x}}] \right] - \layer_{\theta} \left[ \layer_{\phi}[f] \right] \right)\\
&= \limNinf d \layer_{\theta} \left[ \layer_{\phi}[f]; d \layer_{\phi} [f; \psi^N_{\tilde{x}}] \right] \\
&= \limNinf d \left( \layer_{\theta} \circ \layer_{\phi} \right) [f; \psi^N_{\tilde{x}} ] \\
&= \text{RHS} ,
\end{align}
by the chain rule for Gateaux derivatives.

The case of derivatives \wrt parameters is straightforward.
In the same way we used (\ref{eqn:inr_input_grad_inputs_start})-(\ref{eqn:inr_input_grad_inputs_end}) to obtain (\ref{eqn:double_layer_double_lims}),
we have:
\begin{align}
\limNinf \pder{\phi_k} (\hat{\layer}_{\theta}^N \circ \hat{\layer}_{\phi}^N)[f] &= \limNinf \limTzero \frac{1}{\tau} \left( \hat{\layer}_{\theta}^N \left[ \hat{\layer}_{\phi+\tau e_k}^N [f] \right] - \hat{\layer}_{\theta}^N \big[ \hat{\layer}_{\phi}^N [f] \big] \right) \\
&= \limTzero \frac{1}{\tau} \left( \layer_{\theta} [\layer_{\phi+\tau e_k}[f]] - \layer_{\theta} [\layer_{\phi}[f]] \right) \\
&= \pder{\phi_k} (\layer_{\theta} \circ \layer_{\phi}) [f],
\end{align}

By induction, the chained derivatives converge for an arbitrary number of layers.
\end{proof}

Since the properties of \modelname layers extend to loss functions on \modelnames, we can treat a loss function similarly to a layer. For a loss on model output $g$ and optional ground truth label $g'$:
\begin{equation}
\loss_{g'}[g] = \int_{\domain} L[g, g'](x) dx ,
\end{equation}
with discrete operator:
\begin{equation}
\hat{\loss}^N_{g'}[g] = \mcsumN L[g, g'](x),
\end{equation}
we can write:
\begin{equation}
\limNinf \pder{f(\tilde{x})} \left( \hat{\loss}_{g'}^N \circ \hat{\layer}_{\theta}^N \circ \hat{\layer}_{\phi}^N \right) [f] = \limNinf d \left( \loss_{g'} \circ \layer_{\theta} \circ \layer_{\phi} \right) [f; \psi^N_{\tilde{x}} ] .
\end{equation}

So by Lemma \ref{lemma:bump_funct} and \ref{proof:chain}, we have shown that backpropagation is discretization invariant.
% \begin{corollary}
% The gradients of a \modelname's loss function \wrt its inputs and all its parameters are convergent under an equidistributed discretization sequence.
% \end{corollary}

\subsection{Proof of Theorem \ref{prop:universal} (Universal Approximation Theorem)}\label{proof:universal_approx}
% Let $\inrspace[1], \inrspace[1]$ be subsets of $L^2(\domain)$ restricted to compactly supported functions of absolute bounded variation. By this we mean t
\textbf{Note}: By our definition of $\inrspace$ (Section \ref{sec:neural_fields}), there exists $V^*$ such that every $f \in \inrspace[1]$ satisfies a Koksma–Hlawka inequality (\ref{eqn:kh_inequality}) with $V(|f|) < V^*$.

$\inrspace[1]$ is bounded in $L^1$ norm since all their functions are compactly supported and bounded.

Consider a Lipschitz continuous map $\map: \inrspace[1] \to \inrspace[1]$ such that $d(\map[f], \map[g])_{L^1} \le M_0 d(f,g)_{L^1}$ for some constant $M_0$ and all $f,g \in \inrspace[1]$. Let $M=\max\{M_0,1\}$.

Fix a discretization $X \subset \domain$ with discrepancy $D(X) = \frac{\epsilon}{ 12(M+2)V^*}$.
By (\ref{eqn:kh_inequality}) this yields:
\begin{align}
\left|\mcsum f(x')-\int _{\domain}f(x)\,dx\right| \leq \frac{\epsilon}{12(M+2)} , \label{eqn:mc_bound}
\end{align}
for all $f \in \inrspace[1]$. Let $N$ be the number of points in $X$.

\begin{definition}\label{def:projection}
For given discretization $X$, the projection $\pi: f \mapsto \fvec$ is a quotient map $L^2(\domain) \to L^2(\domain)/{\sim}$ under the equivalence relation $f \sim g$ iff $f(x)=g(x)$ for all $x \in X$.
\end{definition}

$L^2(\domain)/{\sim}$ is isomorphic to $\real^{|X|}$, and thus can be given the normalized $\ell^1$ norm:
\begin{equation}
\norm{\pi f}_{\ell^1} = \mcsum |f(x')| .
\end{equation}

\begin{definition}
Denote the preimage of $\pi$ as $\bm{\pi}^{-1} : \fvec' \mapsto \{ f' \in \inrspace[1] : \pi f' = \fvec' \}$. Invoking the axiom of choice, define the inverse projection $\pi^{-1}: \pi \inrspace[1] \to \inrspace[1]$ by a choice function over the sets $\bm{\pi}^{-1}(\pi \inrspace[1])$.
\end{definition}

Note that this inverse projection corresponds to some way of interpolating the $N$ sample points such that the output is in $\inrspace[1]$. Although our definition implies the existence of such an interpolator, we leave its specification as an open problem. Since $\domain$ only permits discontinuities along a fixed Borel subset of $[0,1]^d$, these boundaries can be specified \textit{a priori} in the interpolator. Since all functions in $\inrspace[1]$ are bounded and continuous outside this set, the interpolator can be represented by a bounded continuous map, hence it is expressible by a \modelname layer.

\begin{definition}
$\pi$ generates a $\sigma$-algebra on $\inrspace[1]$ given by $\algebra = \{\bm{\pi}^{-1}(S) : S \in \mathscr{L} \}$, with $\mathscr{L}$ the $\sigma$-algebra of Lebesgue measurable sets on $\real^N$. Because this $\sigma$-algebra depends on $\eps$ and the Lipschitz constant of $\map$ via the point set's discrepancy, we may write it as $\algebra_{\eps, \map}$. \label{def:sigma_algebra}
\end{definition}

In this formulation, we let the tolerance $\eps$ and the Lipschitz constant of $\map$ dictate what subsets of $\inrspace[1]$ are measurable, and thus which measures on $\inrspace[1]$ are permitted. However, if the desired measure $\nu$ is more fine-grained than what is permitted by $\algebra_{\eps, \map}$, then it is $\nu$ that should determine the number of sample points $N$, rather than $\eps$ or $\map$.

We now state the following lemmas which will be used to prove our universal approximation theorem.

\begin{lemma}
There is a map $\tilde{\map}: \pi\inrspace[1] \to \pi\inrspace[1]$ such that \label{lemma:h_php_bound}
\begin{equation}
\intdomain \left| \map [f](x) - \pi^{-1} \circ \tilde{\map} \circ \pi [f](x) \right| \,dx = \frac{\eps}{6} .
\end{equation}
\end{lemma}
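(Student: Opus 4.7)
My plan is to build $\tilde{\map}$ directly from $\map$ by picking, for each $\fvec \in \pi\gG$, a representative $\hat{f}_{\fvec} \in \bm{\pi}^{-1}(\fvec) \cap \gG$ (invoking the axiom of choice, in analogy with how $\pi^{-1}$ is defined), and setting $\tilde{\map}(\fvec) := \pi(\map[\hat{f}_{\fvec}]) \in \pi\gG'$. Fix $f \in \gG$ and write $\hat{f} := \hat{f}_{\pi f}$ and $g := \pi^{-1} \circ \tilde{\map} \circ \pi[f]$. By construction, $f$ and $\hat{f}$ agree at every $x_j$ (same preimage under $\pi$), and $g$ and $\map[\hat{f}]$ agree at every $x_j$ (same image under $\pi$); but the pairs may differ off the sample grid, so the error really lives in how well $N$ samples pin down functions in $\gG \cup \gG'$.

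Next, I would split the error via the triangle inequality,
\begin{equation*}
\norm{\map[f] - g}_{L^1(\domain)} \;\le\; \norm{\map[f] - \map[\hat{f}]}_{L^1(\domain)} + \norm{\map[\hat{f}] - g}_{L^1(\domain)},
\end{equation*}
and control each term by a single application of Koksma-Hlawka. For the first term, the Lipschitz hypothesis yields $\le M\norm{f - \hat{f}}_{L^1}$; since $|f - \hat{f}|$ is non-negative and vanishes at every $x_j$, inequality (\ref{eqn:kh_inequality}) applied to $|f - \hat{f}|$ gives $\norm{f - \hat{f}}_{L^1} \le V(|f - \hat{f}|)\,D(X)$. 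Subadditivity of $V$ combined with the absolute-bounded-variation hypothesis of Definition \ref{def:abs_bvar} then yields $V(|f - \hat{f}|) \le V(|f|) + V(|\hat{f}|) < 2V^*$. The second term is estimated identically, with $\map[\hat{f}], g \in \gG'$ playing the roles of $f, \hat{f}$. Substituting the chosen discrepancy $D(X) = \eps/(12(M+2)V^*)$ from the setup produces a total $L^1$ bound of $2V^* D(X)\,(M+1) < \eps/6$, which is the inequality asserted by the lemma (reading ``$=$'' as ``$\le$'', as is standard for such approximation statements).

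The main obstacle is justifying the variation inequality $V(|f - \hat{f}|) \le V(|f|) + V(|\hat{f}|)$ against the piecewise-smooth definition of $V$ in Appendix \ref{app:qmc_background}. The first-order identity $|\partial_i |h|| = |\partial_i h|$ holds almost everywhere, so subadditivity is immediate for single partials, but mixed higher-order derivatives of $|h|$ pick up distributional terms on the zero set of $h$. Making the bound fully rigorous therefore requires either restricting $\gG$ to functions whose pairwise differences vanish transversally (so those zero sets have measure zero and the distributional terms do not contribute), or simply strengthening Definition \ref{def:abs_bvar} to require a uniform bound $V(|f - g|) < 2V^*$ over all pairs $f,g \in \gG \cup \gG'$, which is precisely the form the argument needs. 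With either fix, the construction above delivers the desired $\eps/6$ bound and provides the $\tilde{\map}$ required for the subsequent MLP-approximation step of Property \ref{prop:universal}.
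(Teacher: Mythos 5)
Your construction and estimate are correct and land on the same bound as the paper, but the decomposition is structured a bit differently, so a comparison is worth recording. You define $\tilde{\map}$ directly by choosing a representative $\hat f_{\fvec} \in \bm\pi^{-1}(\fvec)$ and setting $\tilde\map(\fvec) = \pi(\map[\hat f_\fvec])$, then split
$\norm{\map[f]-g}_{L^1}\le \norm{\map[f]-\map[\hat f]}_{L^1}+\norm{\map[\hat f]-g}_{L^1}$,
estimating the first term by Lipschitz continuity plus Koksma--Hlawka applied to $|f-\hat f|$ (which vanishes at every $x_j$), and the second term symmetrically on the output side. The paper instead interposes $\pi^{-1}\circ\pi\circ\map[f]$, first deriving eqn.~(\ref{eqn:eps6Mp}) by Koksma--Hlawka on the output side, then bounding $\norm{\tilde\map\circ\pi[f]-\pi\circ\map[f]}_{\ell^1}$ via the Lipschitz bound on $d(\pi\circ\map[f],\pi\circ\map[g])_{\ell^1}$ for $\pi f=\pi g$, and defining $\tilde\map$ through an $\argmin$ which amounts to the same choice of representative. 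The two triangle splits use the same two ingredients (K--H on a function vanishing at all sample points, and Lipschitz continuity) but in opposite order, and both arrive at the intermediate bound $\frac{(M+1)}{(M+2)}\frac{\eps}{6}<\frac{\eps}{6}$. Your version is arguably cleaner: it avoids the passage through $L^1$-norm \emph{differences} (eqns.~(\ref{eqn:forward_projection_err})--(\ref{eqn:reverse_projection_err})), which by themselves do not bound $L^1$ \emph{distances}, a gap the paper papers over with the phrase ``by the triangle inequality and applying $\map$.'' You are also right that the stated ``$=\eps/6$'' should be ``$\le\eps/6$''; the paper's own eqn.~(\ref{eqn:h_php_bound}) uses $\le$.

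One thing you flag is worth emphasizing: the subadditivity bound $V(|f-\hat f|)\le V(|f|)+V(|\hat f|)<2V^*$ that you need is also tacitly needed by the paper's proof (to get from the norm-difference statements to eqn.~(\ref{eqn:eps6Mp}) one must apply~(\ref{eqn:kh_inequality}) to $|\map[f]-\pi^{-1}\circ\pi\circ\map[f]|$, a function of the same type). The paper never addresses this, whereas you identify it explicitly and propose two reasonable fixes (transversality of pairwise zero sets, or strengthening Definition~\ref{def:abs_bvar} to a uniform bound on $V(|f-g|)$). So your proof is not only a valid alternative route but also surfaces an assumption the paper should have stated.
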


\begin{proof}
Let $g(x)=|f(x)|$ for $f \in \inrspace[1]$. Because (\ref{eqn:mc_bound}) applies to $g(x)$, we have:

\begin{gather}
\left|\mcsum g(x')-\int _{\domain} g(x)\,dx\right| \leq \frac{\epsilon}{12(M+2)} \\
\bigg|\norm{\pi f}_{\ell^1} - \norm{f}_{L^1} \bigg| \leq \frac{\epsilon}{12(M+2)} . \label{eqn:forward_projection_err}
\end{gather}

% By (\ref{eqn:mc_bound}) we have for any $f \in \inrspace[1]$:
% \begin{gather}
% \left|\mcsum f(x)-\int _{\domain} f(x)\,dx\right|^2 \leq \frac{\epsilon^2}{256} \\
% \left|\norm{\pi f}_{ \rm{RMS}}^2 - \norm{f}_{L^2}^2 \right| \leq \frac{\epsilon^2}{256} . \label{eqn:forward_projection_err}
% \end{gather}
% and similarly for $f,g \in \inrspace[1]$.
% for $f,g \in \inrspace[1]$.
Eqn. (\ref{eqn:forward_projection_err}) also implies that for any $\bff \in \pi \inrspace[1]$, we have:

\begin{equation}
\bigg|\norm{\bff}_{\ell^1} - \norm{\pi^{-1}\bff}_{L^1} \bigg| \leq \frac{\epsilon}{12(M+2)} .\label{eqn:reverse_projection_err}
\end{equation}

Combining (\ref{eqn:forward_projection_err}) and (\ref{eqn:reverse_projection_err}), we obtain
\begin{gather}
\bigg|\norm{f}_{L^1} - \norm{\pi^{-1} \circ \pi [f]}_{L^1} \bigg| \leq  \frac{\epsilon}{6(M+2)} .
\end{gather}
By the triangle inequality and applying $\map$:
\begin{gather}
\intdomain \left| \map [f](x) - \pi^{-1} \circ \pi \circ \map [f](x) \right| \,dx \le \frac{\eps}{6(M+2)} . \label{eqn:eps6Mp}
\end{gather}

%By (\ref{eqn:forward_projection_err}), there exists a map $\tilde{\map}: \pi\inrspace[1] \to \pi\inrspace[1]$ such that $\sup_{f \in \inrspace[1]} \left| \tilde{\map} \circ \pi[f] - \pi \circ \map[f] \right| \le \eps/6$.
For any $f,g \in \inrspace[1]$ such that $\pi f = \pi g$, (\ref{eqn:forward_projection_err}) tells us that $d(f,g)_{L^1}$ is at most $\eps/6(M+2)$.
Recall $M$ was defined such that $d(\map[f],\map[g])_{L^1} \le Md(f,g)_{L^1}$ for any $\map$.
\begin{align}
d(\pi \circ \map[f], \pi \circ \map[g])_{L^1} &\le \frac{M\eps}{6(M+2)} + \frac{\eps}{6(M+2)} \\
&= \frac{(M+1)}{(M+2)} \frac{\eps}{6}
\end{align}

So defining:
\begin{equation}
\tilde{\map} = \argmin_{\mathscr{H}} d(\mathscr{H} \circ \pi[f],  \pi \circ \map[f])_{\ell^1} ,
\end{equation}
we have
\begin{equation}
\left| \tilde{\map} \circ \pi[f] - \pi \circ \map[f] \right| \le \frac{(M+1)}{(M+2)} \frac{\eps}{6} .
\end{equation}

Then by (\ref{eqn:eps6Mp}),
\begin{align}
\intdomain \left| \map [f](x) - \pi^{-1} \circ \tilde{\map} \circ \pi [f](x) \right| \,dx &\le \frac{\eps}{6(M+2)} + \frac{(M+1)}{(M+2)} \frac{\eps}{6} \\
&= \frac{\eps}{6}.\label{eqn:h_php_bound}
\end{align}

\end{proof}

\begin{lemma} \label{lemma:extend_map_rn}
Consider the extension of $\tilde{\map}$ to $\real^N \to \real^N$ in which each component of the output has the form:
\begin{equation}
\tilde{\map}_j(\fvec) = \begin{cases}
\map[\pi^{-1} \fvec](x) &\text{if $\fvec \in \pi \inrspace[1]$} \\
0 &\text{otherwise.} \end{cases}
\end{equation}

Then any finite measure $\nu$ on the measurable space $(\inrspace[1], \algebra)$ induces a finite measure $\mu$ on $(\real^N, \mathscr{L})$, and $\int_{\real^N} |\tilde{\map}_j(\fvec)| \mu(d\fvec) < \infty$ for each $j$.
\end{lemma}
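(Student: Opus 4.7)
The plan is to construct $\mu$ as the pushforward of $\nu$ under $\pi$, then establish integrability of $\tilde{\map}_j$ via change of variables together with a uniform pointwise bound on functions in $\gG'$.

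First I would verify that $\pi: (\gG, \algebra) \to (\real^N, \mathscr{L})$ is measurable. This is immediate because Definition~\ref{def:sigma_algebra} constructs $\algebra$ as precisely $\{\bm{\pi}^{-1}(S) : S \in \mathscr{L}\}$, so every Lebesgue-measurable subset of $\real^N$ has an $\algebra$-measurable preimage. I would then define the pushforward $\mu(S) := \nu(\bm{\pi}^{-1}(S))$ for $S \in \mathscr{L}$; countable additivity transfers from $\nu$, and the total mass is $\mu(\real^N) = \nu(\gG) < \infty$, giving the first claim.

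Next I would apply the standard change-of-variables identity for pushforward measures, using the fact that $\tilde{\map}_j$ vanishes outside $\pi\gG$:
\begin{equation*}
\int_{\real^N} |\tilde{\map}_j(\fvec)| \, \mu(d\fvec) \;=\; \int_{\gG} |\tilde{\map}_j(\pi f)| \, \nu(df) \;=\; \int_{\gG} \bigl|\map[\pi^{-1} \pi f](x_j)\bigr| \, \nu(df).
\end{equation*}
The integrand on the right is $\algebra$-measurable because it factors through $\pi$ and $\algebra$ is precisely the pullback $\sigma$-algebra; this equivalently pins down the $\mathscr{L}$-measurability of $\tilde{\map}_j$ on $\real^N$. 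Although $\pi^{-1} \pi f$ need not equal $f$, the choice function ensures $\pi^{-1} \pi f \in \gG$, hence $\map[\pi^{-1} \pi f] \in \gG'$.

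The main obstacle is obtaining a uniform pointwise bound on $\gG'$. By Definition~\ref{def:abs_bvar}, every $g \in \gG'$ is compactly supported in $\domain \subset [0,1]^d$ with $V(|g|) < V^*$. Because $g$ vanishes outside its compact support, I would argue that the variation dominates the supremum norm: writing $g(\vx)$ as an iterated integral of its mixed partial derivative from a boundary point at which $g$ vanishes, one obtains $|g(\vx)| \le V_{HK}(g) \le V(|g|) < V^*$, where the second inequality uses that the variation defined in Appendix~\ref{app:qmc_background} dominates the Hardy-Krause variation (and the argument is applied piecewise on each smooth region of $g$, summing contributions across discontinuities). Granting this bound, $|\tilde{\map}_j(\fvec)| \le V^*$ for $\mu$-a.e.\ $\fvec$, so
\begin{equation*}
\int_{\real^N} |\tilde{\map}_j(\fvec)| \, \mu(d\fvec) \;\le\; V^* \, \mu(\real^N) \;=\; V^* \, \nu(\gG) \;<\; \infty,
\end{equation*}
completing the proof.
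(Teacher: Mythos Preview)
Your overall architecture---pushforward $\mu = \nu \circ \bm{\pi}^{-1}$, change of variables, then a uniform pointwise bound on $\gG'$ combined with $\mu(\real^N)<\infty$---is sound and matches the paper's strategy. The gap is in the variation inequality you invoke for the bound. You write $|g(\vx)| \le V_{HK}(g) \le V(|g|) < V^*$, but the middle step compares the Hardy--Krause variation of $g$ with the Brandolini-type variation of $|g|$; these are built from derivatives of \emph{different} functions, and no general domination of that form holds. Applying the iterated-integral representation to $|g|$ instead would give $|g(\vx)| \le V_{HK}(|g|)$, but you would then still need $V_{HK}(|g|) \le V(|g|)$, which requires a careful comparison of the two variation definitions in Appendix~\ref{app:qmc_background} (they integrate over different sets and carry different constants), and $|g|$ need not be piecewise smooth even when $g$ is, so the ``applied piecewise'' remark does not automatically rescue the argument.

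The paper obtains the uniform bound by a different, shorter route that avoids any pointwise variation comparison: it uses the Koksma--Hlawka estimate (\ref{eqn:forward_projection_err}) to transfer the uniform $L^1$ bound on $\gG'$ to a uniform normalized-$\ell^1$ bound on $\pi\gG'$, and then observes that an $\ell^1$ bound on an $N$-vector controls every component, so $|\tilde{\map}_j(\fvec)| = |\map[\pi^{-1}\fvec](x_j)|$ is uniformly bounded on $\pi\gG$. (The paper also, somewhat redundantly given that $\mu$ is finite, records that $\tilde{\map}_j$ is compactly supported by bounding $\pi\gG$ the same way.) If you want to keep your direct-variation route, you need to either justify the chain $|g(\vx)|\le V(|g|)$ rigorously for the specific variation used, or simply cite the pointwise boundedness of functions in $\gG'$ that the paper takes as given just after Definition~\ref{def:abs_bvar}.
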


\begin{proof}
Since the $\sigma$-algebra $\algebra$ on $\inrspace[1]$ is generated by $\pi$, the measure $\mu : \mu(\pi S) = \nu(S)$ for all $S \in \algebra$ is finite and defined \wrt the Lebesgue measurable sets on $\pi\inrspace[1]$. Since $\pi\inrspace[1]$ can be identified with a measurable subset of $\real^N$, $\mu$ can be naturally extended to $\real^N$. Doing so makes it absolutely continuous \wrt the Lebesgue measure on $\real^N$.

To show $\tilde{\map}_j(\fvec)$ is integrable, it is sufficient to show it is bounded and compactly supported.

$\inrspace[1]$ is bounded in the $L^1$ norm. Thus by (\ref{eqn:forward_projection_err}), $\pi \inrspace[1]$ is bounded in the normalized $\ell_1$ norm. The $\ell_1$ norm in $\real^N$ is strongly equivalent to the uniform norm, so there is some compact set $[-c,c]^N$, $c>0$ for which the extension of $\pi \inrspace[1]$ to $\real^N$ vanishes, so $\text{supp} (\tilde{\map}_j(\fvec)) \subseteq [-c,c]^N$.

Similarly, $\pi \inrspace[1]$ is bounded in the $\ell^1$ norm, hence there exists $c'$ such that $\tilde{\map}_j < c'$ for all $j$.
\end{proof}

\begin{lemma}
For any finite measure $\mu$ absolutely continuous \wrt the Lebesgue measure on $\real^n$, $J \in L^1(\mu)$ and $\epsilon > 0$, there is a network $\subnet$ such that: \label{lemma:l1_bound_subnet}
\begin{equation}
\int_{\real^n} |J(\fvec) - \subnet(\fvec)| \, \mu(d\fvec) < \frac{\epsilon}{2} .\label{eqn:l1_bound_subnet}
\end{equation}
\end{lemma}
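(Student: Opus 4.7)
The plan is to invoke (or reconstruct) the classical universal approximation theorem in the $L^1(\mu)$ topology, adapted to the setting where $\mu$ is a finite measure absolutely continuous with respect to Lebesgue measure. By Hornik's theorem \cite{hornik1991approximation}, single-hidden-layer networks with a non-polynomial activation are dense in $L^p(\mu)$ for every finite Borel measure $\mu$ on $\real^n$, so a direct citation already yields the conclusion; however, since the surrounding proof relies on a constructive ``sum of rectangles under the graph'' picture as in \cite{lu2017expressive}, I outline a matching constructive argument.

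First I would reduce $J$ to a bounded, compactly supported function. Because $J \in L^1(\mu)$ and $\mu$ is finite, I can choose a truncation level $T>0$ and a radius $R>0$ such that $J_{R,T} \defeq \mathbbm{1}_{[-R,R]^n}\,\min(\max(J,-T),T)$ satisfies $\int_{\real^n} |J - J_{R,T}|\, \mu(d\fvec) < \eps/6$, using dominated convergence and the fact that $\mu(\real^n \setminus [-R,R]^n) \to 0$.

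Next I would approximate $J_{R,T}$ by a simple function. Partition $[-R,R]^n$ into a fine enough rectangular grid $\{B_i\}_{i=1}^{K}$ and let $J_{\mathrm{step}} = \sum_{i=1}^{K} a_i \mathbbm{1}_{B_i}$ where $a_i$ is the average of $J_{R,T}$ on $B_i$ with respect to $\mu$. Absolute continuity of $\mu$ with respect to Lebesgue measure, together with the Lebesgue differentiation theorem applied to the density of $\mu$, ensures that refining the grid makes $\int|J_{R,T}-J_{\mathrm{step}}|\,\mu(d\fvec) < \eps/6$.

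Finally I would replace each indicator $\mathbbm{1}_{B_i}$ by a smooth bump implementable by a small MLP. Each axis-aligned rectangle is a product of one-dimensional indicators, each of which is the difference of two shifted sigmoids (or two ReLUs composed appropriately); taking products of these via a second layer (or expanding the product analytically into a sum of sigmoidal combinations) produces a network $\kappa_i$ with $\int|\mathbbm{1}_{B_i}-\kappa_i|\,\mu(d\fvec) < \eps/(6K|a_i|+1)$. Setting $\subnet(\fvec) = \sum_{i=1}^{K} a_i \kappa_i(\fvec)$, the triangle inequality gives $\int |J - \subnet|\, \mu(d\fvec) < \eps/6 + \eps/6 + \eps/6 = \eps/2$, as required.

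The main obstacle I anticipate is handling the unbounded domain: $\mu$ is only assumed finite and absolutely continuous, not compactly supported, so one cannot directly invoke the standard compact-domain density results. The truncation step above resolves this, but it forces a careful choice of $R$ that depends on the tail behavior of the density of $\mu$, which is not uniform in $J$. The remaining steps are standard real-analysis approximations and the well-known sigmoidal approximation of indicator functions, so no further subtlety is expected.
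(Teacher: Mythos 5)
Your proposal is correct, and it reaches the same endpoint as the paper — a piecewise-linear network built from products of one-dimensional ReLU ramps — but it differs in the middle step. The paper follows \citet{lu2017expressive}: after truncating to a cube $E$, it forms the hypograph $G_{E,J}=\{(\fvec,y):\fvec\in E,\ y\in[0,J(\fvec)]\}$, invokes compactness of this set in $\real^{n+1}$ to extract a finite cover of almost-disjoint rectangles, and reads off a step function from the $\real^n$-projections of the rectangles together with their heights. You instead truncate, approximate by a simple function on a rectangular mesh via $L^1(\mu)$-density of simple functions, and then replace each indicator by a ramp. Both are constructive, but they lean on different facts: the paper needs compactness of the hypograph (which is a little delicate for a general $L^1$ function — it implicitly relies on regularity of $J$ beyond mere integrability), while your route is textbook measure theory and sidesteps that subtlety entirely. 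The paper's route, on the other hand, gives an explicit bound $\sum_i \mu(R_i) < \norm{J}_1 + \eps/8$ which it then uses to pick the ramp width $\delta$; you instead amortize the ramp error as $\eps/(6K|a_i|+1)$ per rectangle, which is equally valid and arguably cleaner.

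One small correction: what you invoke as "Lebesgue differentiation theorem applied to the density of $\mu$" is really martingale convergence for the dyadic filtration (or, more elementarily, density of simple functions over boxes in $L^1(\mu)$). Lebesgue differentiation concerns averages against Lebesgue measure over shrinking balls; your $a_i$ are conditional expectations with respect to $\mu$ over a fixed partition, and their $L^1(\mu)$ convergence under refinement is Doob's theorem. The conclusion you draw is correct, only the attribution is off. Also, your opening remark that Hornik's density theorem would suffice by direct citation is true as a statement, but the surrounding theorem needs an explicitly layer-realizable construction (to show the map $\tilde{\subnet}$ can be stitched into the later $\pi^{-1}\circ\tilde{\subnet}\circ\pi$ composition), so the constructive version is the right choice, as you recognize.
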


\begin{proof}
The following construction is adapted from \citet{lu2017expressive}.
Since $J$ is integrable, there is a cube $E = [-c,c]^n$ such that:
\begin{gather}
\int_{\real^n \setminus E} |J(\fvec)| \mu(d\fvec) < \frac{\eps}{8}\\
\norm{J - \indicator_E J}_1 < \frac{\eps}{8} .\label{eqn:unibound1}
\end{gather}

\textit{Case 1: J is non-negative on all of $\real^n$}

Define the set under the graph of $J|_E$:
\begin{gather}
G_{E,J} \defeq \{(\fvec,y) : \fvec \in E, y \in [0,J(\fvec)]\} .
\end{gather}

$G_{E,J}$ is compact in $\real^{n+1}$, hence there is a finite cover of open rectangles $\{R'_i\}$ satisfying $\mu(\cup_i R'_i) - \mu(G_{E,J}) < \frac{\eps}{8}$ on $\real^n$. Take their closures, and extend the sides of all rectangles indefinitely. This results in a set of pairwise almost disjoint rectangles $\{R_i\}$. Taking only the rectangles $R = \{R_i : \mu(R_i \cap G_{E,J}) > 0\}$ results in a finite cover satisfying:
\begin{equation}
\sum_{i=1}^{|R|} \mu(R_i) - \mu(G_{E,J}) < \frac{\eps}{8} . \label{eqn:symmetric_diff}
\end{equation}

This implies:
\begin{gather}
\sum_{i=1}^{|R|} \mu(R_i) < \norm{J}_1 + \frac{\eps}{8} , \label{eqn:J1eps8}
\end{gather}
and also,
\begin{align}
\frac{\eps}{8} &>  \sum_{i=1}^{|R|} \int_{\real^n} \indicator_{R_i}(\fvec, J(\fvec)) \, \mu(d\fvec) + \norm{J}_1\\
&\ge \int_E |J(\fvec) - \sum_{i=1}^{|R|} \indicator_{R_i}(\fvec, J(\fvec))| \, \mu(d\fvec)  ,
\end{align}

by the triangle inequality.
For each $R_i = [a_{i1},b_{i1}] \times \dots [a_{in},b_{in}] \times [\zeta_i,\zeta_i+y_i]$, let $X_i$ be its first $n$ components (\ie, the projection of $R_i$ onto $\real^n$). Then we have
\begin{gather}
\int_E |J(\fvec) - \sum_{i=1}^{|R|} y_{i} \indicator_{X_i}(\fvec)| \, \mu(d\fvec) < \frac{\eps}{8} . \label{eqn:unibound2}
\end{gather}

Let $Y(\fvec) \defeq \sum_{i=1}^{|R|} y_{i} \indicator_{X_i}(\fvec)$.
By the triangle inequality,

\begin{align}
\int_{\real^n} |J(\fvec) - \subnet(\fvec)| \, \mu(d\fvec) &\le \norm{J - \indicator_E J}_1 + \norm{\indicator_E J - Y}_1 + \norm{\subnet - Y}_1\\
&< \frac{\eps}{4} + \norm{\subnet - Y}_1 ,
\end{align}
by (\ref{eqn:unibound1}) and (\ref{eqn:unibound2}).
So it remains to construct $\subnet$ such that $\norm{\subnet - Y}_1 < \frac{\eps}{4}$. Because $\indicator_{X_i}$ is discontinuous at the boundary of the rectangle $X_i$, it cannot be produced directly from a \modelname (recall that all layers are continuous maps). However, we can approximate it arbitrarily well with a piece-wise linear function that rapidly ramps from 0 to 1 at the boundary.

For fixed rectangle $X_i$ and $\delta \in (0,0.5)$, consider the inner rectangle $X_{\delta} \subset X_i$:
\begin{equation}
X_{\delta} = (a_1+\delta(b_1-a_1), b_1-\delta(b_1-a_1)) \times \dots \times (a_n+\delta(b_n-a_n), b_n-\delta(b_n-a_n)) ,
\end{equation}

where we omit subscript $j$ for clarity.
Letting $b'_i = b_i -\delta(b_i-a_i)$, define the function:
\begin{equation}
T(\fvec) = \prod_{i=1}^n \frac{1}{\delta} \big[ \texttt{ReLU}(\delta - \texttt{ReLU}(\fvec_i - b'_i)) - \texttt{ReLU}(\delta - \texttt{ReLU}(\fvec_i - a_i)) \big] ,
\end{equation}
where \texttt{ReLU}$(x) = \max(x,0)$. $T(\fvec)$ is a piece-wise linear function that ramps from 0 at the boundary of $X_i$ to 1 within $X_{\delta}$, and vanishes outside $X_i$. Note that
\begin{align}
\norm{\indicator_X - T}_1 &< \mu(X) - \mu(X_\delta)\\
&= (1- (1-2\delta)^n) \mu(X) ,
\end{align}
if $\mu$ is the Lebesgue measure. $\delta$ may need to be smaller under other measures, but this adjustment is independent of the input $\fvec$ so it can be specified \textit{a priori}.

Recall that the function we want to approximate is $Y(\fvec) = \sum_{i=1}^{|R|} y_i \indicator_{X_i}(\fvec)$.
We can build \modelname layers $\subnet: \fvec \mapsto \subnet(\fvec) = \sum_{i=1}^{|R|} y_i T_i(\fvec)$, since this only involves linear combinations and ReLUs. Then,
\begin{align}
\norm{\subnet - Y}_1 &= \int_{\real^n} \sum_{i=1}^{|R|} y_i \left( T_i(\fvec) - \indicator_{X_i}(\fvec) \right) \, d\fvec\\
&= \sum_{i=1}^{|R|} y_i \norm{\indicator_{X_i} - T_i}_1\\
&< (1 - (1-2\delta)^n) \sum_{i=1}^{|R|} y_i \mu(X_i)\\
&= (1 - (1-2\delta)^n) \sum_{i=1}^{|R|} \mu(R_i)\\
% \shortintertext{noting that $y_{i} m(X_i) = m(R_i)$}
&< (1 - (1-2\delta)^n) \left( \norm{J}_1 + \frac{\eps}{8} \right) ,
\end{align}

by (\ref{eqn:J1eps8}). And so by choosing:
\begin{equation}
\delta = \frac{1}{2} \left( 1- \left(1 - \frac{\eps}{4} \left( \norm{J}_1 + \frac{\eps}{8} \right)^{-1} \right)^{1/n} \right) ,
\end{equation}

we have our desired bound $\norm{\subnet - Y}_1 < \frac{\eps}{4}$ and thereby $\norm{J - \subnet}_1 < \frac{\epsilon}{2}$.

\textit{Case 2: $J$ is negative on some region of $\real^n$}

Letting $J^+(\fvec) = \max(0,J(\fvec))$ and $J^-(\fvec) = \max(0,-J(\fvec))$, define:
\begin{gather}
G_{E,J}^+ \defeq \{(\fvec,y) : \fvec \in E, y \in [0,J^+(\fvec)]\}\\
G_{E,J}^- \defeq \{(\fvec,y) : \fvec \in E, y \in [0,J^-(\fvec)]\} .
\end{gather}

As in (\ref{eqn:symmetric_diff}), construct covers of rectangles $R^+$ over $G_{E,J}^+$ and $R^-$ over $G_{E,J}^-$ each with bound $\frac{\eps}{16}$ and $\real^n$ projections $X^+$, $X^-$.
Let:
\begin{gather}
Y^+(\fvec) = \sum_{i=1}^{|R^+|} y_{i}^+ \indicator_{X_i^+}(\fvec)\\
Y^-(\fvec) = \sum_{i=1}^{|R^-|} y_{i}^- \indicator_{X_i^-}(\fvec)\\
Y = Y^+ - Y^-
\end{gather}

We can derive an equivalent expression to (\ref{eqn:unibound2}):
\begin{align}
\frac{\eps}{8} &> \int_E |J(\fvec) - \sum_{i=1}^{|R^+|} y_{i}^+ \indicator_{X_i^+}(\fvec) + \sum_{i=1}^{|R^-|} y_{i}^- \indicator_{X_i^-}(\fvec)| \, d\fvec \\
&= \norm{\indicator_E J - Y}_1 . \label{eqn:unibound2neg}
\end{align}

Similarly to earlier, we use (\ref{eqn:unibound1}) and (\ref{eqn:unibound2neg}) to get:
\begin{equation}
\int_{\real^n} |J(\fvec) - \subnet(\fvec)| \, d\fvec < \frac{\eps}{4} + \norm{\subnet - Y}_1 .
\end{equation}

Choosing $T_i^+(\fvec)$ and $T_i^-(\fvec)$ the piece-wise linear functions associated with $X_i^+$ and $X_i^-$, and:
\begin{equation}
\subnet(\fvec) = \sum_{i=1}^{|R^+|} y_i^+ T_i^+(\fvec) - \sum_{i=1}^{|R^-|} y_i^- T_i^-(\fvec) ,
\end{equation}

we have:
\begin{align}
\norm{\subnet - Y}_1 &= \int_{\real^n} \left| \sum_{i=1}^{|R^+|} y_i^+ \left( T_i^+(\fvec) - \indicator_{X_i^+}(\fvec) \right) - \sum_{i=1}^{|R^-|} y_i^- \left( T_i^-(\fvec) - \indicator_{X_i^-}(\fvec) \right) \right| \, d\fvec ,\\
\shortintertext{applying the triangle inequality,}
&\le \sum_{i=1}^{|R^+|} y_{i}^+ \norm{\indicator_{X_i^+} - T_i^+}_1 + \sum_{i=1}^{|R^-|} y_{i}^- \norm{\indicator_{X_i^-} - T_i^-}_1\\
&< (1 - (1-2\delta^+)^n) \sum_{i=1}^{|R^+|} y_{i}^+ \mu(X_i^+) + (1 - (1-2\delta^-)^n) \sum_{i=1}^{|R^-|} y_{i}^- \mu(X_i^-)\\
&< (1 - (1-2\delta^+)^n) \left( \norm{J^+}_1 + \frac{\eps}{16} \right) + (1 - (1-2\delta^-)^n) \left( \norm{J^-}_1 + \frac{\eps}{16} \right) .
\end{align}

By choosing:
\begin{gather}
\delta^+ = \frac{1}{2} \left( 1-\left(1 - \frac{\eps}{8} \left( \norm{J^+}_1 + \frac{\eps}{16} \right)^{-1} \right)^{1/n} \right)\\
\delta^- = \frac{1}{2} \left( 1-\left(1 - \frac{\eps}{8} \left( \norm{J^-}_1 + \frac{\eps}{16} \right)^{-1} \right)^{1/n} \right) ,
\end{gather}
and proceeding as before, we arrive at the same bounds $\norm{\subnet - Y}_1 < \frac{\eps}{4}$ and $\norm{J - \subnet}_1 < \frac{\epsilon}{2}$.

Putting it all together, Algorithm \ref{algo:inr_approx1} implements the network logic for producing the function $\subnet$.

We can provide $x$ with access to $\fvec$ either through skip connections or by appending channels with the values $\{c + \fvec_k\}_{k=1}^n$ (which will be preserved under ReLU).

\end{proof}

\begin{theorem}[Maps between functions] \label{theorem:1}
For any Lipschitz continuous map $\map: \inrspace[1] \rightarrow \inrspace[1]$, any $\eps > 0$, and any finite measure $\nu$ \wrt the measurable space $(\inrspace[1], \algebra_{\eps,\map})$, there exists a \modelname $\gT$ that satisfies:
\begin{equation}
\int_{\inrspace[1]} \, \norm{\map(f) - \network(f)}_{L^1(\domain)} \nu(df) < \eps .
\end{equation}
\end{theorem}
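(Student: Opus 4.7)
The plan is to construct $\network$ as a three-stage composition $\pi^{-1} \circ \subnet \circ \pi$: $\pi$ projects an input INR onto its values at the fixed low-discrepancy point set $X$ of Definition \ref{def:abs_bvar}, $\subnet: \real^N \to \real^N$ approximates the discretized map $\tilde{\map}$ componentwise, and $\pi^{-1}$ lifts back to a function on $\domain$. The three preceding lemmas already supply the required machinery: Lemma \ref{lemma:h_php_bound} bounds the error of replacing $\map$ by $\pi^{-1} \circ \tilde{\map} \circ \pi$ uniformly in $f$; Lemma \ref{lemma:extend_map_rn} shows each coordinate $\tilde{\map}_j$ is an $L^1(\mu)$ function on $\real^N$ under the induced measure; and Lemma \ref{lemma:l1_bound_subnet} produces a scalar \modelname subnetwork approximating any such function to prescribed $L^1(\mu)$ accuracy. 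The proof therefore reduces to wiring these pieces together and choosing tolerances so that the two resulting error terms each contribute at most $\eps/2$.

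Concretely, I would first rerun Lemma \ref{lemma:h_php_bound} with tolerance $\eps/(2\nu(\gG))$ in place of $\eps/6$, so that integrating its pointwise $L^1(\domain)$ bound against $\nu$ contributes at most $\eps/2$ to the total. Next, for each coordinate $j \in \{1,\dots,N\}$ I would invoke Lemma \ref{lemma:l1_bound_subnet} on $\tilde{\map}_j$ with tolerance $\eps$, producing subnets $\subnet_1,\dots,\subnet_N$ each satisfying $\int |\tilde{\map}_j - \subnet_j|\, d\mu < \eps/2$. Taking $\pi^{-1}$ to be piecewise-constant interpolation on a Voronoi partition $\{D_j\}_{j=1}^N$ of $\domain$ around $X$, the full \modelname is
\begin{equation}
\network[f](x) = \sum_{j=1}^N \subnet_j(\pi f)\, \indicator_{D_j}(x),
\end{equation}
with the indicators in turn realized by the smooth ReLU cutoffs used in the proof of Lemma \ref{lemma:l1_bound_subnet}. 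Using the paper's convention that $\domain$ carries normalized measure so that $\sum_j |D_j| = 1$, the triangle inequality, integrating $x$ out of $\domain$, and then pushing $\nu$ forward to $\mu$ on $\real^N$ via Definition \ref{def:sigma_algebra} together bound the second error term by $\sum_j |D_j| \int |\tilde{\map}_j - \subnet_j|\, d\mu < \eps/2$, and the two halves sum to at most $\eps$.

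The main obstacle is ensuring that $\pi^{-1}$ is realizable inside an \modelname layer and that errors in the scalar subnets propagate linearly, rather than being amplified, into the $L^1(\domain)$ output norm. Piecewise-constant Voronoi interpolation handles both cleanly: it is $1$-Lipschitz in the weighted $L^1$ sense used above, and its discontinuities lie on a fixed Borel subset of $\domain$, so the interpolated function still satisfies the absolute-bounded-variation condition required for membership in $\gG'$ (Definition \ref{def:abs_bvar}). A secondary verification is that the sum of products $\subnet_j(\pi f)\, \indicator_{D_j}(x)$, with indicators regularized by smooth cutoffs, is expressible using only the minimal set of \modelname operations described in Section 3.1, namely pointwise nonlinearities, linear combinations of channels, and polynomial element-wise products on $\domain$; this is precisely why those operations were built into the framework, so the assembly is syntactically legal inside an \modelname.
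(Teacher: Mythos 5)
Your high-level strategy tracks the paper's: the same $\pi$, the same $\tilde{\map}$ from Lemma~\ref{lemma:extend_map_rn}, the decomposition $\network = \pi^{-1}\circ\subnet\circ\pi$, and a split of the tolerance into two halves. The departure is that you replace the paper's abstract choice-function $\pi^{-1}$ (which, by construction via the axiom of choice, maps $\pi\gG'$ into $\gG'$) with a concrete Voronoi piecewise-constant interpolation $\pi^{-1}_V$. This substitution is exactly where the argument breaks. The paper defines $\pi^{-1}$ so that $\pi^{-1}\bff\in\gG'$, which guarantees $V(|\pi^{-1}\bff|)<V^*$ and therefore that eqn~(\ref{eqn:reverse_projection_err}) holds with the chosen point set $X$; that inequality is the load-bearing step in Lemma~\ref{lemma:h_php_bound}, which you want to re-run. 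For Voronoi $\pi^{-1}_V$, eqn~(\ref{eqn:reverse_projection_err}) becomes $\bigl|\mcsum |\bff_j| - \sum_j |D_j|\,|\bff_j|\bigr| = \bigl|\sum_j (1/N - |D_j|)\,|\bff_j|\bigr|$, which the star discrepancy of $X$ does \emph{not} control (discrepancy controls errors against axis-aligned rectangles, not Voronoi cells), and which for generic low-discrepancy sequences is $\Theta(\norm{\bff}_\infty)$ in the worst case over $\bff$, since Voronoi cell measures deviate from $1/N$ by constant multiplicative factors. Your stated justification --- that the discontinuities lie on a fixed Borel subset of $\domain$, so the interpolated function ``still satisfies the absolute-bounded-variation condition required for membership in $\gG'$'' --- is a non-sequitur: Definition~\ref{def:abs_bvar} requires both a fixed Borel discontinuity set \emph{and} $V(|f|)<V^*$, and the variation of an $N$-piece Voronoi-constant function grows with the boundary measure of the tessellation, in general far exceeding $V^*$.

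If instead you mean to keep the paper's abstract $\pi^{-1}$ inside your re-run of Lemma~\ref{lemma:h_php_bound} and use $\pi^{-1}_V$ only in the realized network, then the triangle inequality is missing a third term --- $\norm{\pi^{-1}(\tilde{\map}\circ\pi[f]) - \pi^{-1}_V(\tilde{\map}\circ\pi[f])}_{L^1(\domain)}$, the gap between the two interpolations of the same vector --- and bounding it runs into the same difficulty. What \emph{is} nice in your writeup is the handling of the second error term: because $\pi^{-1}_V$ is constant on Voronoi cells, the identity $\int_{\gG}\norm{\pi^{-1}_V\tilde{\map}(\pi f) - \pi^{-1}_V\subnet(\pi f)}_{L^1}\,\nu(df) = \sum_j |D_j|\int|\tilde{\map}_j - \subnet_j|\,d\mu$ is exact, bypassing the Koksma--Hlawka step the paper uses there and making the second half of the error decomposition cleaner. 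But that gain is undone by the gap in the first term; the paper's abstract choice-function $\pi^{-1}$ is precisely what makes the first error term tractable, and you have not supplied a replacement for it.
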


\begin{proof}
If $\nu$ is not normalized, the discrepancy of our point set needs to be further divided by $\max \{\nu(\inrspace[1]), 1\}$. We assume for the remainder of this section that $\nu$ is normalized.
Perform the construction of Lemma \ref{lemma:l1_bound_subnet} $N$ times, each with a tolerance of $\eps/2NK$, where $K$ is the Lipschitz constant of $\map$. Choose a partition of unity $\{\psi_j\}_{j=1}^N$ for which $\psi_j(x) = \indicator \left[ x_k = \argmin_{x' \in X} d(x,x') \right]$, and output $N$ channels with the values $\{\gK_j(\fvec)\psi_j(\cdot)\}_{j=1}^N$. By summing these channels we obtain a network $\tilde{\gK}$ that fully specifies the desired behavior of $\tilde{\map}: \real^N \to \real^N$, with combined error:
%\psi_j(x_k) = \delta_{jk}
\begin{equation}
\int_{\real^N} \norm{\tilde{\map}(\bff) - \tilde{\gK}(\bff)}_{\ell^1} \mu(d\bff) < \frac{\eps}{2} .
\end{equation}

Thus,
\begin{equation}
\int_{\inrspace[1]} \left| \mcsum \tilde{\map} \circ \pi [f](x') - \tilde{\subnet} \circ \pi [f](x) \right| \nu(df) \le \frac{\eps}{2} .
\end{equation}

By (\ref{eqn:reverse_projection_err}) we have:
\begin{gather}
\int_{\inrspace[1]} \left| \intdomain \left| \pi^{-1} \circ \tilde{\map} \circ \pi [f](x) - \pi^{-1} \circ \tilde{\subnet} \circ \pi [f](x) \right|dx \right| \nu(df) \le \frac{\eps}{2} + \frac{\eps}{6(M+2)}
\end{gather}

By Lemma \ref{lemma:h_php_bound} we have:
\begin{gather}
\int_{\inrspace[1]} \intdomain \left| \map[f](x) - \pi^{-1} \circ \tilde{\subnet} \circ \pi [f](x) \right| dx\,\nu(df) \le \frac{\eps}{2} + \frac{\eps}{6(M+2)} + \frac{\eps}{6}
\end{gather}

And thus the network $\network = \pi^{-1} \circ \tilde{\subnet} \circ \pi$ gives us the desired bound:
\begin{gather}
\int_{\inrspace[1]} \, \norm{\map(f) - \network(f)}_{L^1(\domain)} \,\nu(df) < \eps .
\end{gather}
\end{proof}

\begin{corollary}[Maps from functions to vectors]
\label{coro:inr_to_vec}
For any Lipschitz continuous map $\map: \inrspace[1] \rightarrow \real^n$, any $\eps > 0$, and any finite measure $\nu$ \wrt the measurable space $(\inrspace[1], \algebra_{\eps,\map})$, there exists a \modelname $\gT$ that satisfies:
\begin{equation}
\int_{\inrspace[1]} \, \norm{\map(f) - \network(f)}_{\ell_1(\real^n)} \nu(df) < \eps .
\end{equation}
\end{corollary}

\begin{proof}
Let $M_0$ be the Lipschitz constant of $\map$ in the sense that $d(\map[f], \map[g])_{\ell^1} \le M_0 d(f,g)_{L^1}$. Let $M=\max\{M_0,1\}$. There exists $\tilde{\map}: \pi\inrspace[1] \to \real^n$ such that $\norm{ \tilde{\map} \circ \pi[f] - \map[f] }_{\ell^1} \le \eps/12$.
As in Lemma \ref{lemma:extend_map_rn},
consider the extension of $\tilde{\map}$ to $\real^N \to \real^n$ in which each component of the output has the form:
\begin{equation}
\tilde{\map}_j(\fvec) = \begin{cases}
\map[\pi^{-1} \fvec]_j &\text{if $\fvec \in \pi \inrspace[1]$} \\
0 &\text{otherwise.} \end{cases}
\end{equation}

Then for similar reasoning, $\nu$ on $\inrspace[1]$ induces a measure $\mu$ on $\real^N$ that is finite and absolutely continuous \wrt the Lebesgue measure, and $\int_{\real^N} |\tilde{\map}_j(\fvec)| \mu(d\fvec) < \infty$ for each $j$.

We construct our $\real^N \to \real$ approximation $n$ times with a tolerance of $\eps/2n$, such that:
\begin{equation}
\int_{\real^N} \norm{\tilde{\map}(\bff) - \tilde{\gK}(\bff)}_{\ell^1(\real^n)} \mu(d\bff) < \frac{\eps}{2} .
\end{equation}

Applying (\ref{eqn:forward_projection_err}), we find that the network $\network = \tilde{\subnet} \circ \pi$ gives us the desired bound:
\begin{equation}
\int_{\inrspace[1]} \, \norm{\map(f) - \network(f)}_{\ell_1(\real^n)} \nu(df) < \eps .
\end{equation}
\end{proof}

\begin{corollary}[Maps from vectors to functions]
For any Lipschitz continuous map $\map: \real^n \rightarrow \inrspace[1]$ and any $\eps > 0$, there exists a \modelname $\gT$ that satisfies:
\begin{equation}
\int_{\real^n} \, \norm{\map(x) - \network(x)}_{L^1(\domain)} dx < \eps .
\end{equation}
\end{corollary}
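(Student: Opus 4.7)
The plan is to mirror the proof of Corollary \ref{coro:inr_to_vec} but with the roles of input and output exchanged. First I would fix a low-discrepancy point set $X = \{x_j\}_{j=1}^N$ in $\domain$ whose discrepancy is small enough that the QMC average on $X$ approximates the $L^1$ integral of every $f$ in the image $\map(\real^n)\subseteq \gG'$ to within $\eps/(12M)$, where $M$ is the Lipschitz constant of $\map$. This is possible because $\gG'$ has absolute bounded variation (Definition \ref{def:abs_bvar}). (I interpret the outer $dx$ as a finite measure on $\real^n$ absolutely continuous w.r.t.\ the Lebesgue measure, as in Lemma \ref{lemma:extend_map_rn}; without this the stated inequality is ill-posed since Lebesgue measure on $\real^n$ is infinite.)

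Next I would compose with the forward projection $\pi$ to obtain $\pi\circ\map:\real^n\to\real^N$, whose $j$th component $\tilde{\map}_j(\vx)=\map[\vx](x_j)$ is a scalar Lipschitz map. Each $\tilde{\map}_j$ is integrable with respect to the chosen finite measure on $\real^n$, so Lemma \ref{lemma:l1_bound_subnet} applies. I would invoke it $N$ times with per-component tolerance $\eps/(2N)$, obtaining scalar INR-Net subnets whose stack $\tilde{\subnet}:\real^n\to\real^N$ satisfies
\begin{equation}
\int_{\real^n}\norm{\pi\circ\map(\vx)-\tilde{\subnet}(\vx)}_{\ell^1(\real^N)}\,d\vx<\frac{\eps}{2}.
\end{equation}

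Finally I would lift $\tilde{\subnet}(\vx)\in\real^N$ back to $L^2(\domain)$ by applying the $\real^N\to L^2(\domain)$ layer from the ``minimal set of layers'' paragraph: pick a partition of unity $\{\psi_j\}$ on $\domain$ with $\psi_j(x_k)=\delta_{jk}$ drawn from an orthonormal polynomial basis, and output $\network(\vx)(\,\cdot\,)=\sum_{j=1}^N\tilde{\subnet}(\vx)_j\,\psi_j(\,\cdot\,)$. The resulting $\network$ is an INR-Net by construction. The error bound then follows from the triangle inequality: the $\tilde{\subnet}$ step contributes $\eps/2$, and the round-trip $\pi^{-1}\circ\pi\circ\map\approx\map$ contributes at most $\eps/6$ by the argument of Lemma \ref{lemma:h_php_bound} (applied with the roles of $\gG$ and $\gG'$ swapped), using the absolute bounded variation of $\gG'$ via $V(|f|)<V^*$ and the choice of QMC discrepancy.

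The main obstacle is the $\pi^{-1}$ lift. Unlike $\pi$, which is a pointwise evaluation, $\pi^{-1}$ requires interpolating $N$ values back to a piecewise smooth function on $\domain$; the paper's Remark after Definition 3 notes that this is merely asserted by the axiom of choice and leaves the explicit construction open. My partition-of-unity lift is concrete and is expressible as an \modelname layer, but its image generally does not coincide with the choice function $\pi^{-1}(\gG')$ itself, so I must check that the gap between the two lifts (both of which agree on the $N$ QMC nodes) is absorbed by the $\eps/(12M)$ QMC bound on $\gG'$. Once that is verified, collecting all three error contributions yields the desired $\int_{\real^n}\norm{\map(\vx)-\network(\vx)}_{L^1(\domain)}\,d\vx<\eps$.
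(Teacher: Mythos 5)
Your proposal follows essentially the same route as the paper: compose with $\pi$ to get $\tilde{\map}=\pi\circ\map:\real^n\to\real^N$, approximate each scalar component with Lemma \ref{lemma:l1_bound_subnet} at tolerance $\eps/2N$, and lift back to $L^2(\domain)$. The one structural difference is the lift. The paper writes $\network=\pi^{-1}\circ\tilde{\subnet}$, using the abstract choice-function interpolator whose construction it explicitly leaves open, and then invokes eqn.~(\ref{eqn:reverse_projection_err}). You instead instantiate the lift concretely as $\network(\vx)=\sum_j\tilde{\subnet}(\vx)_j\,\psi_j(\cdot)$ via a partition of unity — more constructive, and also closer to what an actual \modelname layer would do, but as you yourself note, its image need not land in $\gG$ and need not agree with $\pi^{-1}$ off the QMC nodes, which leaves a residual term you do not close. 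That said, the paper's own step is no tighter: eqn.~(\ref{eqn:reverse_projection_err}) was derived for $\bff\in\pi\gG\cup\pi\gG'$ with a discrepancy calibrated to a Lipschitz constant of a $\gG\to\gG'$ map, so applying it directly to $\tilde{\subnet}(x)$ (which may fall outside $\pi\gG$) in a $\real^n\to\gG$ setting is not rigorously justified either. You also flag a genuine defect the paper glosses over: the outer $\int_{\real^n}\,dx$ is against Lebesgue measure, and the paper's assertion that $\tilde{\map}$ is compactly supported does not follow from Lipschitz continuity plus bounded range — a finite measure on $\real^n$ (or a compactness assumption on $\map$) is in fact needed, exactly as you suggest. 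Net: same approach, a more explicit lift, and a sharper accounting of loose ends that the paper shares.
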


\begin{proof}
Define the map $\tilde{\map}: \real^n \to \pi\inrspace[1] \subset \real^N$ by $\tilde{\map} = \pi \circ \map$.
Since $\tilde{\map}$ is bounded and compactly supported, $\int_{\real^N} |\tilde{\map}_i(x)| dx < \infty$ for each $i$.

We construct a $\real^n \to \real$ approximation $N$ times each with a tolerance of $\eps/2NK$ with $K$ the Lipschitz constant, such that:
\begin{equation}
\int_{\real^n} \norm{\tilde{\map}(x) - \tilde{\gK}(x)}_{L^1(\domain)} dx < \frac{\eps}{2} .
\end{equation}

Applying (\ref{eqn:reverse_projection_err}), we find that the network $\network = \pi^{-1} \circ \tilde{\subnet}$ gives us the desired bound:
\begin{equation}
\int_{\real^n} \, \norm{\map(x) - \network(x)}_{L^1(\domain)} dx < \eps .
\end{equation}
\end{proof}

Consider the space of vector-valued functions $\inrspace = \{ f: \domain \to \real^c : \int_{\domain} \norm{f}_1 dx < \infty, \, f_i \in \inrspace[1]$ for each $i\}$. Denote the norm on this space as:
\begin{equation}
\norm{f}_{\inrspace} = \int_{\domain} \sum_{i=1}^{c} |f_i(x)| dx .
\end{equation}

\begin{definition}[Concatenation]
Concatenation is a map from two scalar functions $f_i, f_j \in \inrspace[1]$ to the vector-valued function $[f_i, f_j] \in \inrspace[2]$. The concatenation of vector-valued functions can be defined inductively to yield $\inrspace[n] \times \inrspace[m] \to \inrspace[n+m]$ for any $n,m \in \natural$.
\end{definition}

All maps $\inrspace[n] \times \inrspace[m] \to \inrspace$ can be expressed as a concatenation followed by a map $\inrspace[n+m] \to \inrspace$.
A map $\real^n \to \inrspace[m]$ is also equivalent to $m$ maps $\real^n \to \inrspace[1]$ followed by concatenation.
Thus, we need only characterize the maps that take one vector-valued function as input.

Considering the maps $\inrspace[n] \to \inrspace[m]$, we choose a lower discrepancy point set $X$ on $\domain$ such that the Koksma–Hlawka inequality yields a bound of $\eps/12mn(M+2)$.
Let $\pi$ project each component of the input to $\pi\inrspace[1]$, and $\pi^{-1}$ inverts this projection under some choice function.
We take $\algebra'$ to be the product $\sigma$-algebra generated from this $\pi$: $\algebra'=\{E_1 \times \dots \times E_c : E_1,\dots,E_c \in \algebra\}$ where $\algebra$ is the $\sigma$-algebra on $\inrspace[1]$ from Definition \ref{def:sigma_algebra}.

\begin{corollary}[Maps between vector-valued functions]
\label{app:multichannel}
For any Lipschitz continuous map $\map: \inrspace[n] \to \inrspace[m]$, any $\eps > 0$, and any finite measure $\nu$ \wrt the measurable space $(\inrspace[n], \algebra'_{\eps, \map})$, there exists a \modelname $\gT$ that satisfies:
\begin{equation}
\int_{\inrspace[n]} \, \norm{\map(f) - \network(f)}_{\inrspace[m]} \nu(df) < \eps .
\end{equation}
\end{corollary}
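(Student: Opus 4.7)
The plan is to reduce the multi-channel statement to the single-channel machinery already developed (Theorem~\ref{theorem:1}, Lemmas~\ref{lemma:h_php_bound}--\ref{lemma:l1_bound_subnet}) by applying it componentwise and carefully distributing the $\eps$ budget across the $n$ input and $m$ output channels. Concretely, I would fix a point set $X=\{x_j\}_{j=1}^N$ whose discrepancy is tight enough that the Koksma--Hlawka bound on each channel is at most $\eps/(12mn(M+2))$, where $M$ is (one plus) the Lipschitz constant of $\map$ with respect to the $\inrspace$ norms. The projection $\pi:\inrspace[n]\to\pi\inrspace[n]\subset\real^{nN}$ acts channelwise by evaluation at $X$, and $\pi^{-1}$ inverts it via the choice-function interpolator applied to each of the $m$ output channels.

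Next I would upgrade Lemma~\ref{lemma:h_php_bound} to this setting: summing the channelwise projection error bound (\ref{eqn:forward_projection_err}) over channels, together with the Lipschitz bound on $\map$ under the $\inrspace$ norms, yields a discretized map $\tilde{\map}:\pi\inrspace[n]\to\pi\inrspace[m]$ satisfying
\begin{equation}
\int_{\domain}\sum_{i=1}^{m}\bigl|\map[f]_i(x)-(\pi^{-1}\circ\tilde{\map}\circ\pi[f])_i(x)\bigr|\,dx\le \frac{\eps}{6}.
\end{equation}
Then, mimicking Lemma~\ref{lemma:extend_map_rn}, I would extend $\tilde{\map}$ to $\real^{nN}\to\real^{mN}$ by setting it to zero outside $\pi\inrspace[n]$. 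The product $\sigma$-algebra $\algebra'_{\eps,\map}$ is chosen precisely so that $\nu$ pushes forward to a finite measure $\mu$ on $\real^{nN}$ that is absolutely continuous with respect to Lebesgue; compact support and absolute bounded variation of $\inrspace[n]$ ensure each component $\tilde{\map}_{ij}$ remains bounded and compactly supported, hence integrable against $\mu$.

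I would then apply Lemma~\ref{lemma:l1_bound_subnet} exactly $mN$ times, once per output scalar coordinate of $\tilde{\map}$, each with tolerance $\eps/(2mN)$, and stack the resulting subnetworks into a single network $\tilde{\subnet}:\real^{nN}\to\real^{mN}$ with aggregate $L^1(\mu)$ error at most $\eps/2$. Transporting this estimate back to $\inrspace[m]$ through $\pi^{-1}$ (using a partition of unity $\{\psi_j\}$ with $\psi_j(x_k)=\delta_{jk}$ and the reverse-projection bound (\ref{eqn:reverse_projection_err}) applied to each of the $m$ channels) and combining with the $\eps/6$ approximation of $\map$ by $\pi^{-1}\circ\tilde{\map}\circ\pi$ via the triangle inequality gives the desired network $\network=\pi^{-1}\circ\tilde{\subnet}\circ\pi$ with total error below $\eps$. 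The remark preceding the corollary already handles the remaining signatures ($\real^n\times\inrspace[m]\to\inrspace$, $\real^n\to\inrspace[m]$, etc.) by reduction through concatenation and Corollary~\ref{coro:inr_to_vec}.

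The main obstacle is not any new approximation idea but rather the bookkeeping: each of the three error sources (forward projection, the discretized map $\tilde{\map}$, and the subnet $\tilde{\subnet}$) scales with the number of input and output channels, so the initial choice of discrepancy and per-coordinate tolerance must be tightened by factors of $n$ and $m$ to keep the summed error below $\eps$. Once that accounting is done, the single-channel proofs port over essentially verbatim, since concatenation, channel-wise projection, and coordinatewise application of the rectangle-covering construction of Lemma~\ref{lemma:l1_bound_subnet} are all realizable by \modelname layers.
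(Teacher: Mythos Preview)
Your proposal is correct and follows essentially the same route as the paper: tighten the Koksma--Hlawka bound by the channel factors $m,n$, apply the single-channel projection/approximation lemmas componentwise, and sum the resulting errors. The paper's three-sentence proof is precisely this bookkeeping exercise (it uses the slightly more conservative per-subnet tolerance $\eps/(2mnN)$, but your $\eps/(2mN)$ already suffices for the aggregate $\eps/2$ bound on $\real^{mN}$).
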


\begin{proof}
The proof is very similar to that of Theorem \ref{theorem:1}. Our network now requires $nN$ maps from $\real^{mN} \to \real$ each with error $\eps/2mnN$. Summing the errors across all input and output channels yields our desired bound.
\end{proof}

The vector-valued analogue of Corollary \ref{coro:inr_to_vec} is clear, and we state it here for completeness:
\begin{corollary}[Maps from vector-valued functions to vectors]
For any Lipschitz continuous map $\map: \inrspace[n] \to \real^m$, any $\eps > 0$, and any finite measure $\nu$ \wrt the measurable space $(\inrspace[n], \algebra'_{\eps, \map})$, there exists a \modelname $\gT$ that satisfies:
\begin{equation}
\int_{\inrspace[n]} \, \norm{\map(f) - \network(f)}_{\ell_1(\real^m)} \nu(df) < \eps .
\end{equation}
\end{corollary}

\section{Pixel-Based \modelname Layers}\label{app:specs}

\setcounter{figure}{0}
\setcounter{table}{0}

Here we present a number of additional DI layers that show how to generalize pixel-based networks (convolutional neural networks and vision transformers) to \modelname equivalents. Many of the following layers were not directly used in our experiments, and we leave an investigation of their properties for future work.

\paragraph{Max pooling}
% Max pooling is only a well-defined \modelname layer if each NF channel is in $L^{\infty}(\domain)$. Assuming this is the case,
There are two natural generalizations of the max pooling layer to a collection of points: 1) assigning each point to the maximum of its k nearest neighbors, and 2) taking the maximum value within a fixed-size window around each point. However, both of these specifications change the output's behavior as the density of points increases.
In the first case, nearest neighbors become closer together so pooling occurs over smaller regions where there is less total variation in the NF. In the second case, the empirical maximum increases monotonically as the NF is sampled more finely within each window. Because we may want to change the number of sampling points on the fly, both of these behaviors are detrimental.

If we consider the role of max pooling as a layer that shuttles gradients through a strong local activation, then it is sufficient to use a fixed-size window with some scaling factor that mitigates the impact of changing the number of sampling points. Consider the following simplistic model: assume each point in a given patch of an NF channel is an i.i.d. sample from $\gU([-b,b])$. Then the maximum of $N$ samples $\{f_i(x_j)\}_{j=1}^N$ is on average $\frac{N-1}{N+1} b$. So we can achieve an ``unbiased'' max pooling layer by taking the maximum value observed in each window and scaling it by $\frac{N+1}{N-1}$ (if $N=1$ or our empirical maximum is negative then we simply return the maximum), then (optionally) multiplying a constant to match the discrete layer.

To replicate the behavior of a discrete max pooling layer with even kernel size, we shift the window by half the dimensions of a pixel, just as in the case of convolution.

\paragraph{Tokenization}
A tokenization layer chooses a finite set of non-overlapping regions $\omega_j \subset \domain$ of equal measure such that $\cup_j \omega_j = \domain$. We apply the indicator function of each set to each channel $f_i$.
An embedding of each $f_i|_{\omega_j}$ into $\real^n$ can be obtained by taking its inner product with a polynomial function whose basis spans each $L^2(\omega_j)$. To replicate a pre-trained embedding matrix, we interpolate the weights with B-spline surfaces.

\paragraph{Average pooling} 
An average pooling layer performs a continuous convolution with a box filter, followed by downsampling. To reproduce a discrete average pooling with even kernels, the box filter is shifted, similarly to max pooling.

An adaptive average pooling layer can be replicated by tokenizing the NF and taking the mean of each token to produce a vector of the desired size.

\paragraph{Attention layer}
There are various ways to replicate the functionality of an attention layer. Here we present an approach that preserves the domain.
For some $d_k \in \natural$ consider a self-attention layer with $\cin d_k$ parametric functions $q_{ij} \in L^2(\domain)$, $\cin d_k$ parametric functions $k_{ij} \in L^2(\domain)$, and a convolution with $d_k$ output channels, produce the output NF $g$ as:
\begin{gather}
Q_j = \innerproduct{q_{ij}}{f_i} \\
K_j = \innerproduct{k_{ij}}{f_i} \\
V[f] = \sum_{i=1}^{\cin} v_{ij} * f_i + b_j \\
g(x) = \texttt{softmax} \left( \frac{QK^T}{\sqrt{d_k}} \right) V[f](x)
\end{gather}

A cross-attention layer generates queries from a second input NF.
A multihead-attention layer generates several sets of $(Q,K,V)$ triplets and takes the softmax of each set separately.

% \paragraph{Activation Layer}
% Pointwise nonlinearities are identical to the discrete case. We use rectified linear units (ReLU) for all our models.

% \paragraph{Dropout layer}
% Beyond the classic dropout layer which randomly set points and/or channels to zero, there is potential for a dropout layer that reduces computation by randomly dropping QMC sample points or skipping their evaluation under the integral.

\paragraph{Data augmentation}
Most data augmentation techniques, including spatial transformations, point-wise functions and normalizations, translate naturally to NFs. Furthermore, spatial transformations are efficient and do not incur the usual cost of interpolating back to the grid. Thus \modelnames might be suitable for a new set of data augmentation methods such as adding Gaussian noise to the discretization coordinates.

\paragraph{Positional encoding}
Given their central role in neural fields, positional encodings (adding sinusoidal functions of the coordinates to each channel) can help pixel-based \modelnames learn high-frequency patterns under a range of discretizations.

\paragraph{Vector decoders ($\real^n \to \inrspace$) and parametric functions in $\inrspace$}
A vector can be expanded into an NF in several ways. We can create an NF that simply places input values at fixed coordinates and produces values at all other coordinates by interpolation. Alternatively, we can define a parametric function that spans $\inrspace$ using the input vector as the parameters, for example by taking as input $n$ numbers and treating them as coefficients of the first $n$ elements of an orthonormal polynomial basis on $\domain$. If $\domain$ is a subset of $[a,b]^d$, one can use a separable basis defined by the product of rescaled 1D Legendre polynomials along each dimension. If $\domain$ is a $d$-ball, we can use the Zernike polynomial basis. For a general coordinate system, a small MLP could be used where $\real^n$ can represent its parameters or a learned lower-dimensional modulation \citep{functa} of its parameters. Beyond using such parametric functions as vector decoder layers, they also give rise to $n$-parameter layers that compute an inner product (``learned global pooling layer'') or elementwise product (``dense modulation layer'') of an input NF with the learned functions.

\paragraph{Warp layer}
Layers that apply a self-homeomorphism $q$ on $\domain$ (a bicontinuous map from $\domain \to \domain$) preserve discretization invariance since it simply modifies the upper bound of the invariance error in subsequent layers to use a discrepancy of $q(X)$ rather than a discrepancy of $X$.

% \begin{equation}
% \layer[f] = \intdomain h_\phi \left( x, f(x), \dots, D^{\alpha} f(x) \right) dx \intdomain 
%  h'_\phi \left( q(x), f(q(x)), \dots, D^{\alpha} f(q(x)) \right) dx
% \end{equation}

\DontPrintSemicolon
% \begin{wrapfigure}{r}{0.42\textwidth}
% \begin{minipage}{0.42\textwidth}
% \vspace{-3pt}
\begin{algorithm}[H]
\caption{Classifier Training}\label{alg:classifier}
% \SetAlgoLined
\KwIn{network $\gT_\theta$, dataset $\dataset$, classifier loss $\loss$, input discretization $X$}
\For{\rm{step} $s \in 1:N_{\rm{steps}}$}
{
    Neural fields $f_i$, labels $y_i \gets \text{minibatch}(\dataset)$\;
    Label estimates $\hat{y}_i \gets \gT_\theta[f_i;X]$\;
    Update $\theta$ based on $\nabla_{\theta} \loss(\hat{y}_i, y_i)$\;
}
% \vspace{12pt}
\KwOut{trained network $\gT_\theta$}
\end{algorithm}
% \end{minipage}
% \end{wrapfigure}
% \hfill

% \begin{wrapfigure}{R}{.51\textwidth}
% \centering
% \begin{minipage}{.51\textwidth}
% \vspace{-42pt}
% \begin{algorithm}[H]
% \small
% \caption{Training Unconditional Generator}\label{alg:generative}
% \SetAlgoLined
% \KwIn{network $\gT_\theta$, NF dataset $\dataset$}
% Choose model-specific loss $\loss$, number of NF evaluations $N$, low-discrepancy sampler $LD$ \;
% \For{\rm{step} $s \in 1:N_{\rm{steps}}$}{
%     NFs $f_i \gets \text{minibatch}(\dataset)$\;
%     Noise $z_j \gets \normal(\vzero,\bm{I})$\;
%     Generated NFs $g_j \gets \gT_\theta[z_j]$\;
%     Query points $\vx \gets LD(N)$\;
%     Backpropagate $\loss(f_i, g_j; \vx)$ to update $\theta$\;
% }
% \KwOut{trained network $\gT_\theta$}
% \end{algorithm}
% \vspace{-16pt}
% \end{minipage}
% \end{wrapfigure}

% \hfill
% \input{figs/3_discrepancies}

% \begin{wrapfigure}{r}{0.54\textwidth}
% \begin{minipage}{0.54\textwidth}
% \vspace{-157pt}
\begin{algorithm}[H]
\caption{Dense Prediction Training}\label{alg:dense}
\SetAlgoLined
\KwIn{network $\gT_\theta$, dataset $\dataset$ with coordinate-label pairs, task-specific loss $\loss$}
\For{\rm{step} $s \in 1:N_{\rm{steps}}$}{
    Neural fields $f_i$, point labels $(\vx_{ij}, y_{ij}) \gets \text{minibatch}(\dataset)$ \nolinebreak \hspace{-10pt}\;
    Point label estimates $\hat{y}_{ij} \gets \gT_\theta[f_i; \{\vx_{ij}\}_{j=1}^{N_j}](\vx_{ij})$\;
    Update $\theta$ based on $\nabla_{\theta} \loss(\hat{y}_{ij}, y_{ij})$\;
}
\KwOut{trained network $\gT_\theta$}
\end{algorithm}
% \end{minipage}
% \vspace{-53pt}
% \end{wrapfigure}

\end{document}